\documentclass[10pt]{article}
\usepackage[accepted]{tmlr}

\usepackage{amsmath,amssymb,amsfonts}
\usepackage{amsthm}
\usepackage{mathrsfs}
\usepackage{xcolor}
\usepackage{textcomp}
\usepackage{booktabs}
\usepackage{multirow}
\usepackage{threeparttable}
\usepackage{graphicx}
\usepackage{subcaption}

\usepackage{algorithm}
\usepackage{algorithmicx}
\usepackage{algpseudocode}
\usepackage{listings}
\usepackage{placeins}

\usepackage{mcr}
\usepackage{bm}

\usepackage{url}
\usepackage{hyperref}
\usepackage{soul}
\usepackage[normalem]{ulem}
\newtheorem{remark}{Remark}

\newtheorem{theorem}{Theorem}

\newtheorem{lemma}{Lemma}
\newtheorem{assumption}{Assumption}
\newtheorem{corollary}{Corollary}

\newcommand{\vx}{\boldsymbol{x}}
\newcommand{\vX}{\boldsymbol{X}}

\newcommand{\vy}{\boldsymbol{y}}
\newcommand{\vc}{\boldsymbol{c}}
\newcommand{\vs}{\boldsymbol{s}}
\newcommand{\veps}{\boldsymbol{\varepsilon}}

\newcommand{\veta}{\boldsymbol{\eta}}

\newcommand{\vbeta}{\boldsymbol{\beta}}
\newcommand{\va}{\boldsymbol{a}}
\newcommand{\vb}{\boldsymbol{b}}
\newcommand{\vh}{\boldsymbol{h}}
\newcommand{\vu}{\boldsymbol{u}}

\newcommand{\mG}{G}
\newcommand{\mL}{L}
\newcommand{\mI}{I}
\newcommand{\mSigma}{\Sigma}

\title{Post-Anomaly Detection Inference for Deep SVDD}

\author{\name Cao Le Cong Thanh \email 23521437@gm.uit.edu.vn \\
      \addr University of Information Technology, Ho Chi Minh City, Vietnam\\
      Vietnam National University, Ho Chi Minh City, Vietnam
      \AND
      \name Dang Quang Vinh \email 23521786@gm.uit.edu.vn \\
      University of Information Technology, Ho Chi Minh City, Vietnam\\
      Vietnam National University, Ho Chi Minh City, Vietnam
      \AND
      \name Vo Nguyen Le Duy\thanks{Corresponding author} \email duyvnl@uit.edu.vn\\
      University of Information Technology, Ho Chi Minh City, Vietnam\\
      Vietnam National University, Ho Chi Minh City, Vietnam}

\def\month{09}  
\def\year{2026} 
\def\openreview{\url{https://openreview.net/forum?id=f8XTHjxBig}} 

\begin{document}

\maketitle

\begin{abstract}
Deep Support Vector Data Description (Deep SVDD) has become a prominent framework for unsupervised anomaly detection by learning latent representations that compactly characterize normal data around a center. Despite its empirical success, anomaly decisions produced by Deep SVDD are typically made solely based on anomaly scores without rigorous statistical guarantees, thereby limiting their reliability in safety-critical and high-stakes applications where false positives must be strictly controlled.
In this paper, we propose \emph{PADI} (Post-Anomaly Detection Inference), a novel framework that equips a trained and frozen Deep SVDD detector with statistically valid inference by leveraging the Selective Inference framework. Specifically, PADI performs inference conditional on the event that a test instance is identified as anomalous by Deep SVDD, thereby enabling rigorous statistical assessment of anomaly decisions. Based on this formulation, we derive valid selective $p$-values that quantify the statistical significance of the detected anomaly. Using these $p$-values, we theoretically establish control of the false positive rate (FPR) at a user-specified significance level $\alpha$ (e.g., $\alpha=0.05$).
Furthermore, we extend the proposed framework to Deep Semi-Supervised Anomaly Detection (Deep SAD), providing a principled approach for statistically reliable inference in semi-supervised anomaly detection settings.
Extensive experiments on both synthetic and real-world benchmark datasets robustly support the theoretical findings. The results demonstrate that PADI consistently achieves proper FPR control while attaining superior true positive rates compared with existing approaches.
\end{abstract}

\section{Introduction}\label{sec1}

Anomaly detection (AD) plays a fundamental role in modern machine learning, with applications spanning cybersecurity, healthcare, bioinformatics, industrial monitoring, finance, and autonomous systems \citep{ahmed2016survey, litjens2017survey, zong2018deep}. Among existing approaches, Support Vector Data Description (SVDD)  \citep{tax2004support} has emerged as one of the most influential frameworks for one-class classification and unsupervised AD. The central idea of SVDD is to learn a compact description of normal data by enclosing the normal samples within a minimal hypersphere in a feature space, such that samples lying far from the learned description are identified as anomalies. Owing to its conceptual simplicity and strong empirical effectiveness, SVDD and its deep variants, particularly Deep SVDD \citep{ruff2018deep}, have been successfully applied to a wide range of problems \citep{yi2020patch, gamper2020meta, you2021anomaly, zhang2022deep, kou2022robust}.

Despite the empirical success, Deep SVDD suffers from a critical limitation: anomaly decisions are made based on anomaly scores or heuristic thresholds without rigorous statistical guarantees. In practice, this means that the false positive rate (FPR) cannot be reliably controlled. Such a limitation becomes particularly problematic in high-stakes applications where false positives may lead to severe consequences. For example, in bioinformatics and medical diagnosis, incorrectly flagging healthy patients or normal biological samples as anomalous may trigger unnecessary follow-up procedures, expensive laboratory analyses, or inappropriate clinical interventions. 
Similarly, in cybersecurity, excessive false positives can overwhelm security analysts, leading to alert fatigue and potentially causing truly malicious activities to be overlooked. 
These challenges highlight the importance of developing a statistically reliable method capable of quantifying the uncertainty of anomaly decisions and rigorously controlling the FPR.

A natural approach to addressing this problem is to formulate anomaly assessment as a statistical hypothesis testing problem. Specifically, given a test instance detected as anomalous by an SVDD-based detector, one aims to quantify its statistical significance. However, constructing valid statistical inference in this setting is highly challenging due to the well-known issue of \emph{double dipping} \citep{kriegeskorte2009circular} or \emph{selection bias}. The same data are used both to identify anomalous instances and to conduct statistical inference, resulting in invalid classical $p$-values and inflated FPR. Consequently, conventional (naive) statistical testing procedures fail to provide reliable FPR control for the Deep SVDD-based AD result.

To overcome this challenge, we leverage the framework of Selective Inference (SI) \citep{lee2016exact}, which enables valid statistical inference after a data-driven selection procedure. Building upon this principle, we propose \emph{PADI} (Post-Anomaly Detection Inference), a novel framework that equips a trained and frozen Deep SVDD detector with statistically valid post-AD inference. The key idea is to perform inference conditional on the event that a test sample is detected as anomalous by Deep SVDD. Based on this formulation, PADI derives valid selective $p$-values that quantify the statistical significance of the detected anomalies while provably controlling the FPR at a user-specified significance level $\alpha$. Importantly, the proposed framework operates in a post hoc manner and does not require retraining or modifying the underlying anomaly detector.

\textbf{Contributions.}
The main contributions of this work are summarized as follows:



$\bullet$ We formulate anomaly assessment in Deep SVDD under the SI framework and address the fundamental double-dipping issue arising from conducting inference after AD. Based on this formulation, we derive valid selective $p$-values and theoretically establish control of the FPR at a user-specified significance level $\alpha$. The proposed PADI method operates in a post hoc manner and can be directly applied to trained and frozen Deep SVDD models without requiring any retraining or modification of the underlying detector. The proposed inference procedure additionally requires an independent set of
normal data, from which normal reference samples are drawn for computing
selective $p$-values. Furthermore, we extend PADI to deep semi-supervised AD model \citep{ruff2019deep}.

$\bullet$ We provide a GPU-accelerated implementation of PADI to improve computational efficiency. By alleviating the computational burden associated with post-AD inference, this implementation extends the practical applicability of PADI to a broader range of deep architectures beyond simple fully connected networks.

$\bullet$ We conduct extensive experiments on both synthetic and real-world benchmark datasets to validate the proposed method. The experimental results consistently demonstrate that PADI achieves reliable FPR control while maintaining superior true positive rates compared with existing methods.

\begin{figure}[t]
    \centering
    \includegraphics[width=\linewidth]{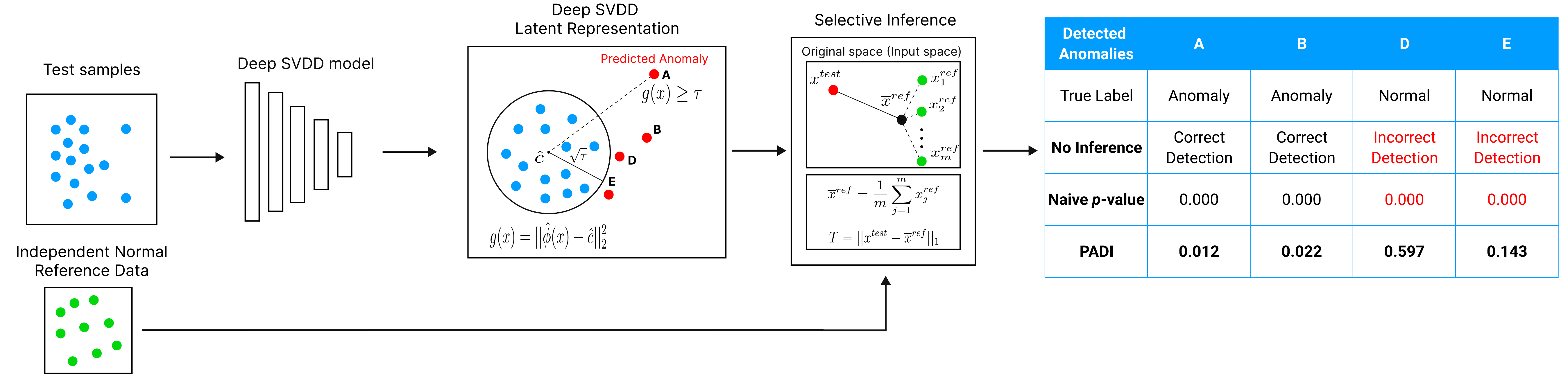}
\caption{Overview of the proposed PADI method. 
A trained and frozen Deep SVDD model first maps test instances into the latent space and selects anomalies whose distance-based anomaly scores exceed a fixed threshold. 
For each selected anomaly, PADI performs SI conditional on the anomaly-selection event induced by the Deep SVDD detector. 
Using independent normal reference samples, PADI computes a selective $p$-value that quantifies the statistical significance of the detected anomaly while accounting for the selection event. 
The right panel illustrates how PADI differs from conventional inference: unlike naive $p$-values that ignore the anomaly-selection process, PADI provides statistically valid inference for detected anomalies.}
    \label{fig:trust-svdd-example}
\vspace{-10pt}
\end{figure}


\vspace{2pt}
\textbf{Related works.} 
Unsupervised AD aims to identify abnormal or rare samples without requiring labeled anomaly instances during training. Early approaches include distance-based methods \citep{knorr2000distance}, density-based techniques such as Local Outlier Factor (LOF) \citep{breunig2000lof}, clustering-based methods \citep{jain1999data}. One-class classification methods, particularly One-Class SVM \citep{scholkopf2001estimating} and Support Vector Data Description (SVDD) \citep{tax2004support}, have also become foundational techniques due to their ability to characterize the distribution of normal data. More recently, deep learning has substantially advanced AD by enabling representation learning in high-dimensional and complex data domains. Representative deep anomaly detection approaches include autoencoder-based methods \citep{sakurada2014anomaly}, GAN-based methods such as f-AnoGAN \citep{schlegl2019f}, and Deep SVDD \citep{ruff2018deep}. Although these deep learning-based methods often demonstrate strong AD performance, most of them still lack a principled statistical framework for rigorously quantifying the significance and reliability of anomaly decisions.

Traditional statistical inference methods fail in this setting because their validity fundamentally relies on the target anomalies being predetermined prior to observing the data. When classical inference procedures are directly applied to anomalies identified by an anomaly detection (AD) algorithm, the resulting statistical tests become invalid due to selection bias, leading to the inability to properly control the FPR at the desired significance level.
Selective Inference (SI) \citep{fithian2014optimal, lee2016exact} has emerged as a promising approach for addressing the invalidity of classical post-selection inference. The core idea of SI is to conduct inference conditional on the event that a particular hypothesis has been selected, thereby removing the selection bias and restoring statistical validity in the sense that the FPR is properly controlled. Following the seminal work of \citet{lee2016exact}, SI has been extensively studied and successfully applied to a broad range of machine learning and statistical problems, including feature selection \citep{lockhart2014significance, fithian2014optimal, tibshirani2016exact, yang2016selective, suzumura2017selective, le2022more}, changepoint detection \citep{umezu2017selective, hyun2018exact, duy2020computing, jewell2022testing}, clustering \citep{lee2015evaluating, inoue2017post, gao2024selective}, image segmentation tasks \citep{tanizaki2020computing, duy2022quantifying}, saliency map analysis \citep{miwa2023valid}, and attention map interpretation in vision transformers \citep{shiraishi2024statistical}.

Statistical inference for AD has recently begun to attract attention within the SI literature. Existing studies have explored SI for anomaly testing in the context of robust regression \citep{chen2020valid, tsukurimichi2022conditional, phong2025controllable}. 
The authors of \citet{niihori2025quantifying} recently investigated the statistical significance of anomalies detected by a $k$-nearest-neighbor-based model. Their test statistic is constructed based on whether a test instance and its data-selected $k$-th nearest normal neighbor share the same underlying signal. The $k$-nearest-neighbor-based AD framework is fundamentally different from SVDD, resulting in distinct formulations of the selection event and, consequently, different challenges in developing the corresponding SI procedure.
\citet{kiet2026statistical} further investigated selective inference (SI) for autoencoder-based AD following representation-learning-based domain adaptation, in contrast to our work, which focuses on SI for AD in the conventional, non-domain-adaptation setting. Moreover, their implementation is primarily designed for relatively simple operations in traditional fully connected neural networks, such as ReLU, and their experimental evaluation is conducted on tabular datasets. Consequently, their method cannot be directly applied to the more complex operations commonly encountered in CNN architectures, such as Conv2D, BatchNorm, and MaxPool, or to image data, which are also considered in our work.
To the best of our knowledge, no existing study has explored the SI framework for quantifying the statistical significance of AD results produced by Deep SVDD.

\section{Problem Statement}
\label{sec:problem_statement}

In this section, we formalize the post-AD inference for a trained Deep SVDD method.
Each input instance is represented as a vector in \(\mathbb{R}^D\), where \(D\) denotes the
input dimension. For tabular data, \(D\) is the number of numerical features. For
image data, an input instance may be represented as a vectorized image; for
example, a grayscale image patch of size \(H\times W\) corresponds to \(D=HW\),
whereas an image with \(C\) channels corresponds to \(D=HWC\).
The Deep SVDD model is trained independently of the post-AD inference task and remains fixed during the test-time analysis. Given a test instance, the trained detector identifies it as anomalous according to its distance from a center in the latent representation space. Our objective is neither to modify the detector nor to perform inference on the training procedure itself. Instead, once a test sample has been identified as anomalous, we aim to statistically validate the detection result by quantifying the statistical significance of the discrepancy between the selected input instance and an independent reference set of observed normal instances.

\subsection{A trained Deep SVDD model and its anomaly detection event}
\label{subsec:frozen_detector_problem}
Let $
\hat{\phi}:\mathbb{R}^D\to\mathbb{R}^p
$
denote the trained encoder, where \(p\) is the latent dimension. Let
$
\hat{\vc}\in\mathbb{R}^p
$
denote the fixed latent center. For any input vector \(\vx^{\rm test} \in\mathbb{R}^D\), the
trained Deep SVDD assigns the following anomaly score:
\begin{equation}
g(\vx^{\rm test})
=
\left\|
\hat{\phi}(\vx^{\rm test})-\hat{\vc}
\right\|_2^2.
\label{eq:score_general}
\end{equation}
Given a fixed threshold \(\tau>0\), the detector classifies \(\vx^{\rm test}\) as anomalous
when
$
g(\vx^{\rm test})\ge \tau.
\label{eq:selection_rule}
$
Equivalently, we define the AD selection event through the following selection mapping:
\begin{equation}
\mathcal{A}:\mathbb{R}^D\to\{0,1\},
\qquad
\mathcal{A}(\vx^{\rm test})
=
\mathbb{I}\{g(\vx^{\rm test})\ge \tau\},
\label{eq:selection_mapping_problem}
\end{equation}
where $\mathbb{I}\{\cdot\}$ denotes the indicator function. The mapping
$\mathcal{A}(\vx^{\rm test})=1$ indicates that the input instance $\vx^{\rm test}$ is selected by
the trained Deep SVDD detector as an anomalous sample, whereas
$\mathcal{A}(\vx^{\rm test})=0$ corresponds to a non-anomalous decision.

The original Deep SVDD formulation includes both hard-boundary and
soft-boundary variants, which differ only in the training objective used to
learn the encoder. Since PADI focuses on the post-selection
inference stage after the detector has been trained, the proposed framework is
independent of the specific training variant. Once the encoder $\hat{\phi}$,
center $\hat{c}$, and anomaly-selection threshold $\tau$ are fixed, PADI applies the same
selective inference procedure to either a hard-boundary or soft-boundary Deep
SVDD detector. Here, $\tau$ denotes the anomaly-selection threshold used in our inference
procedure and should not be confused with the radius parameter $R$ in the
soft-boundary Deep SVDD formulation. In our framework, $\tau$ is a fixed
threshold that determines the anomaly selection event.

%
%


\subsection{Statistical hypothesis testing for the detected anomaly}

\label{subsec:hypothesis_testing_problem}

To formulate the statistical inference problem, we regard the test instance
$\vx^{\rm test}$ as an observed realization of the random vector
\begin{align*}
\vX^{\mathrm{test}}
=
\vs^{\mathrm{test}}
+
\veps^{\mathrm{test}},
\end{align*}
where $\vs^{\mathrm{test}} \in \mathbb{R}^D$ denotes the unknown underlying
signal vector, and $\veps^{\mathrm{test}}$ represents an additive noise vector
following the Gaussian distribution
$\mathcal{N}(\mathbf{0},\Sigma)$. Here,
$\Sigma \in \mathbb{R}^{D \times D}$ denotes the covariance matrix, which is
assumed to be known a priori or estimated from an independent dataset. In practice, this assumption is reasonable in anomaly detection scenarios because normal samples are typically much more abundant than anomalous samples. 
Therefore, an independent set of normal observations can be used to estimate $\Sigma$ without requiring additional anomalous data.

Additionally, we consider a collection of reference instances known to be
normal:
$\vX^{1, \mathrm{ref}}, \dots, \vX^{m, \mathrm{ref}}$
where each reference instance is modeled as
\begin{align*}
\vX^{j, \mathrm{ref}}
=
\bm s^{\mathrm{ref}}
+
\veps^{j, \mathrm{ref}},
\qquad
j \in [m]= \{1,\dots,m\},
\end{align*}
with $\bm s^{\mathrm{ref}} \in \mathbb{R}^D$ denoting the underlying normal
signal vector. The noise vector $\veps^{j, \mathrm{ref}}$ is assumed to follow
the Gaussian distribution
$
\veps^{j, \mathrm{ref}}
\sim
\mathcal{N}(\mathbf{0},\mSigma)
$,
independently across $j$. The normal reference instances used for selective inference are assumed to be independent of the dataset used for estimating $\Sigma$. 
This separation ensures that the covariance matrix is treated as a fixed quantity during the selective inference procedure, which is consistent with the theoretical derivation.

Our objective is to determine whether the underlying signal associated with
the test instance differs statistically significantly from that of the
reference normal instances. This objective can be formulated as a hypothesis testing problem, consisting of the following null
hypothesis $H_0$ and alternative hypothesis $H_1$:
\begin{align*}
{\rm H}_0:\ \vs^{\mathrm{test}}=\vs^{\rm ref}
\qquad\text{vs.}\qquad
{\rm H}_1:\ \vs^{\mathrm{test}}\neq \vs^{\rm ref}.
\label{eq:null_signal_mean}
\end{align*}
The test statistic for evaluating the above hypotheses is defined as follows:
\begin{equation}
T\!\left(
\vX^{\mathrm{test}},
\vX^{1, \mathrm{ref}},
\dots,
\vX^{m, \mathrm{ref}}
\right)
=
\left\|
\vX^{\mathrm{test}}
-
{\bar{\vX}}^{\mathrm{ref}}
\right\|_1,
\label{eq:test_stat_general}
\end{equation}
where ${\bar{\vX}}^{\mathrm{ref}}
=
\frac{1}{m}\sum_{j=1}^m \vX^{j, \mathrm{ref}}$.
We would like to note that the choice of the $\ell_1$-norm is motivated by the desire to align our procedure with the seminal SI framework of \cite{lee2016exact}. Although the $\ell_2$-norm could also be considered, its use would result in a test statistic of quadratic form. Such a statistic falls outside the theoretical framework established in \cite{lee2016exact}, upon which our SI procedure is based, where the test statistic is required to be a linear contrast of the data.

\subsection{Decision making based on 
$p$-values and challenges}
\label{subsec:challenges}

After obtaining the test statistic in \eq{eq:test_stat_general}, the next step
is to compute the corresponding $p$-value. Given a significance level
$\alpha \in [0,1]$ (e.g., $\alpha = 0.05$), we reject the null hypothesis and
conclude that the test instance is anomalous if the computed $p$-value is less
than or equal to $\alpha$. Conversely, if the $p$-value exceeds $\alpha$, we
conclude that there is insufficient statistical evidence to determine that the
test instance is anomalous.

The $p$-value is defined as follows:
\begin{equation}
p
=
\mathbb{P}_{{\rm H}_0}
\Big(
\left | T\!\left(
\vX^{\mathrm{test}},
\vX^{1, \mathrm{ref}},
\dots,
\vX^{m, \mathrm{ref}}
\right)
\right |
\ge
\left |T\!\left(
\bm x^{\mathrm{test}},
\bm x^{1, \mathrm{ref}},
\dots,
\bm x^{m, \mathrm{ref}}
\right)
\right |
\Big),
\label{eq:pvalue}
\end{equation}
where $\bm x^{j, \rm ref}$ denotes an observed realization of the random vector
of $\vX^{j, \rm ref}$ for each $j \in [m]$, respectively.
Unfortunately, computing the $p$-value in \eq{eq:pvalue} is intractable because the test statistic depends on both the observed data and the AD result produced by the Deep SVDD model, for which no direct computation is available.
A conventional (naive) approach computes the $p$-value while ignoring the fact that the test instance has been selected as anomalous by the trained Deep SVDD model. As a consequence, the resulting naive $p$-value is statistically invalid and fails to properly control the FPR. In particular, it does \emph{not} satisfy the fundamental validity criterion required of a valid $p$-value:
\begin{align} \label{eq:valid_p_value}
	\mathbb{P} \Big (
	\underbrace{p{-\rm value} \leq \alpha \mid {\rm H}_{0} \text{ is true }}_{\text{a false positive}}
	\Big) = \alpha, ~~ \forall \alpha \in [0, 1],
\end{align} 
In the next section, we introduce a \emph{selective $p$-value} for statistically
testing anomalies detected by Deep SVDD, which satisfies the aforementioned
validity criterion.

\section{Proposed PADI Method}
\label{sec:proposed_method}

In this section, we introduce PADI, an SI-based method for computing statistically valid $p$-values for anomalies detected by a trained Deep SVDD model. Rather than relying on the unconditional null distribution of the test statistic in \eqref{eq:test_stat_general}, the proposed method characterizes the null distribution conditional on the data-dependent AD event induced by the Deep SVDD detector.

\subsection{Representation of the test statistic as a linear contrast of the data vector}
\label{subsec:stacked_sign_problem}

%

Let $\bm Y$ denote the \((m+1)D\)-dimensional stacked vector formed by concatenating the test instance and the reference instances:
\begin{equation}
\bm Y
=
\operatorname{vec}
\left(
\vX^{\mathrm{test}},
\vX^{1, \mathrm{ref}},
\dots,
\vX^{m, \mathrm{ref}}
\right)
\in\mathbb{R}^{(m+1)D},
\label{eq:stacked_general}
\end{equation}
where \(\operatorname{vec} (\cdot)\) denotes the operation that concatenates multiple vectors into a single column vector.
We aim to represent the test statistic as a
linear contrast of the vector \(\bm Y\). To this end, define the
coordinate-wise sign pattern
\begin{equation}
\cS(\bm Y)
=
\operatorname{sign}
\left(
\vX^{\mathrm{test}}
-
{\bar{\vX}}^{\mathrm{ref}}
\right)
\in\{-1,1\}^D
\label{eq:sign_mapping}
\end{equation}
Then, the test statistic in \eq{eq:test_stat_general} admits
the linear representation
\begin{equation}
T(\bm Y)
=
\veta^\top \bm Y,
\label{eq:linear_representation}
\end{equation}
where $\bm \eta$ is the direction vector of the test statistic, defined as:
\begin{equation}
\veta
=
\begin{pmatrix}
\cS(\bm Y)\\
-\frac{1}{m}\cS(\bm Y)\\
\vdots\\
-\frac{1}{m}\cS(\bm Y)
\end{pmatrix}
\in\mathbb{R}^{(m+1)D}.
\label{eq:eta}
\end{equation}

\subsection{Conditional distribution of the test statistic and the proposed selective $p$-value}
\label{subsec:selective_pvalue_method}

To compute a statistically valid $p$-value, we need to characterize the sampling distribution of the test statistic in \eqref{eq:test_stat_general}. To this end, we leverage the framework of conditional SI \citep{lee2016exact}. Specifically, we consider the conditional distribution of the test statistic given the data-dependent selection event:
\begin{align} \label{eq:conditional_distribution}
	\bP
	\left ( 
		\bm \eta^\top \bm Y 
		\mid
		\cA(\vX^{\rm test})=\cA(\bm x^{\rm test}),\,
\cS(\bm Y)=\cS(\bm y)
	\right ). 
\end{align}
Here, the first condition
\(
\cA(\vX^{\rm test})=\cA(\bm x^{\rm test})
\)
represents the event that the AD result for the random vector
\(\vX^{\rm test}\) coincides with the AD result obtained from the observed
data \(\bm x^{\rm test}\).
The second condition
\(
\cS(\bm Y)=\cS(\bm y)
\)
represents the event that the sign pattern defined in
\eqref{eq:sign_mapping} for the random vector \(\bm Y\) coincides with the
observed sign pattern for \(\bm y\).

Based on the distribution in \eq{eq:conditional_distribution}, we introduce the selective $p$-value defined as:
\begin{equation}
p^{\mathrm{selective}}
=
\mathbb P_{H_0}
\Big(
|\veta^\top \bm Y|
\ge
|\veta^\top \bm y|
\mid
\cA(\vX^{\rm test})=\cA(\bm x^{\rm test}),
\,
\cS(\bm Y)=\cS(\bm y), 
\, 
\cQ(\bm Y) = \cQ(\bm y)
\Big),
\label{eq:selective_pvalue}
\end{equation}
where $\cQ(\bm Y)$ denotes sufficient statistic of the nuisance parameter, defined as:
\begin{equation}
 \cQ(\bm Y) 
=
\left(\mI_{(m+1)D}-\vb\veta^\top\right)\bm Y,
\quad
\vb
=
\frac{\tilde{\mSigma} \veta}{\veta^\top \tilde{\mSigma}\veta},
\quad 
\tilde{\mSigma}= I_{m+1}\otimes \mSigma.
\label{eq:Q_and_b_main}
\end{equation}

\begin{remark}
The quantity \(\mathcal{Q}(\bm Y)\) acts as a sufficient statistic for the
nuisance parameter, i.e., a parameter that influences the null distribution but
is not of direct inferential interest. To properly characterize the null
distribution, the effect of this nuisance parameter must be eliminated. In our
framework, this is accomplished by conditioning on the sufficient statistic
\(\mathcal{Q}(\bm Y)\). This conditioning step is primarily technical and is
standard in the SI literature (see Sec.~5 and Eq.~(5.2) of
\mbox{\cite{lee2016exact}}; \mbox{\cite{fithian2014optimal}}).
\end{remark}

Intuitively, Eq.~\eqref{eq:Q_and_b_main} decomposes the data vector $\bm Y$ into the scalar contrast of inferential interest $z = \veta^\top \bm Y$ and the remaining nuisance component $\mathcal{Q}(\bm Y)$, since $\bm Y = \mathcal{Q}(\bm Y) + \vb z$. Conditioning on $\mathcal{Q}(\bm Y) = \mathcal{Q}(\bm y)$ fixes this nuisance variation while leaving only $z$ free to vary. Consequently, the remaining randomness is restricted to the one-dimensional affine line $\bm Y = \va + \vb z$, as formalized in Theorem~\ref{thm:data_line}.

\begin{theorem}
\label{thm:validity_main}
The selective p-value proposed in \eq{eq:selective_pvalue} satisfies the property of a valid $p$-value:
\[
\mathbb P_{H_0}
\left(
p^{\mathrm{selective}}\le \alpha
\right)
=
\alpha, \quad \forall \alpha \in [0, 1]
\]
\end{theorem}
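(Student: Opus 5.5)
The plan is the standard conditional selective inference argument of \cite{lee2016exact}. First we show that, conditionally on the selection events and on the nuisance statistic, the selective $p$-value is exactly uniform. Then we integrate out the conditioning.

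\textbf{Step 1: distribution of the stacked vector under ${\rm H}_0$.} Under ${\rm H}_0$ the stacked vector satisfies $\bm Y\sim\mathcal N(\bm\mu,\tilde\Sigma)$ with $\bm\mu=\operatorname{vec}(\vs^{\rm ref},\dots,\vs^{\rm ref})$. For any fixed sign vector $\vs\in\{-1,1\}^D$, let $\veta_{\vs}$ denote the direction in \eqref{eq:eta} built from $\vs$. Its contrast vanishes on $\bm\mu$, because $\vs^\top\vs^{\rm ref}-\frac{1}{m}\cdot m\,\vs^\top\vs^{\rm ref}=0$. Hence $\veta_{\vs}^\top\bm Y\sim\mathcal N(0,\sigma^2)$ with $\sigma^2=\veta_{\vs}^\top\tilde\Sigma\veta_{\vs}$.

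\textbf{Step 2: independence of the contrast and the nuisance statistic.} With $\vb$ defined as in \eqref{eq:Q_and_b_main}, we have $\mathrm{Cov}(\cQ(\bm Y),\veta_{\vs}^\top\bm Y)=(I-\vb\veta_{\vs}^\top)\tilde\Sigma\veta_{\vs}=0$. Joint Gaussianity then gives that $\cQ(\bm Y)$ and $\veta_{\vs}^\top\bm Y$ are independent.

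\textbf{Step 3: conditional law of the contrast.} The subtle point is that $\veta$ depends on $\cS(\bm Y)$. We handle this by working on each fixed sign pattern $\vs$ separately, so that $\veta=\veta_{\vs}$ is deterministic there. On $\{\cQ(\bm Y)=\bm q\}$ we can write $\bm Y=\bm q+\vb z$ with $z=\veta_{\vs}^\top\bm Y$. The event
\[
\{\cA(\vX^{\rm test})=\cA(\bm x^{\rm test}),\ \cS(\bm Y)=\vs,\ \cQ(\bm Y)=\bm q\}
\]
therefore reduces to $\{z\in\mathcal Z(\bm q,\vs)\}$ for the set
\[
\mathcal Z(\bm q,\vs)=\{z:\cA(\text{first block of }\bm q+\vb z)=\cA(\bm x^{\rm test}),\ \cS(\bm q+\vb z)=\vs\}.
\]
By Step 2, the conditional law of $z$ given these events is $\mathcal N(0,\sigma^2)$ truncated to $\mathcal Z(\bm q,\vs)$. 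This set depends only on $(\bm q,\vs)$ and is measurable, since $g$ is continuous as a composition of network layers. We need it to have positive measure, which holds almost surely for the observed data.

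\textbf{Step 4: uniformity and integration.} Let $F$ denote the CDF of $|z|$ under this truncated law. The selective $p$-value equals $1-F(|z_{\rm obs}|^-)$. Since $F$ is continuous, the probability integral transform gives
\[
\mathbb P_{H_0}\bigl(p^{\rm selective}\le\alpha\mid \cA,\ \cS=\vs,\ \cQ=\bm q\bigr)=\alpha
\]
for all $\alpha\in[0,1]$. Taking the expectation over $(\bm q,\vs)$, and over the selection outcome, yields $\mathbb P_{H_0}(p^{\rm selective}\le\alpha)=\alpha$ by the tower property.

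\textbf{Main obstacle.} The delicate step is the sign-pattern dependence of $\veta$ in Step 3. Conditioning on $\cS$ must be placed inside the partition so that $\veta$ and $\vb$ are fixed, and the independence in Step 2 must be applied for that fixed $\veta_{\vs}$. Doing the argument partition cell by partition cell is exactly what makes the conditioning legitimate; the rest is the routine truncated-Gaussian computation.
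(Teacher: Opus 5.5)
Your proposal is correct and follows essentially the same route as the paper. It establishes a truncated-Gaussian conditional law for $\boldsymbol{\eta}^\top\bm Y$ given the selection event, the sign pattern and the nuisance statistic $\mathcal{Q}(\bm Y)$, applies the probability integral transform for conditional uniformity, and then integrates out $\mathcal{Q}$ and sums over the outcomes $\mathcal{O}(\bm y)=(\mathcal{A},\mathcal{S})$. Your cell-by-cell handling of the sign pattern, fixing $\boldsymbol{\eta}_{\boldsymbol{s}}$ and $\boldsymbol{b}$ before invoking independence, makes explicit what the paper does implicitly by putting $\mathcal{S}(\bm Y)$ into the conditioning event $\mathcal{O}(\bm Y)$.
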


\begin{proof}
The proof is given in Appendix~\ref{app:proof_validity}.
\end{proof}

\subsection{Tractable characterization of the conditioning event for selective $p$-value computation}

The selective $p$-value in \eqref{eq:selective_pvalue} requires evaluating the conditional distribution
of the test statistic under the event that the anomaly selection and the
associated conditioning information, including the sign pattern induced by
the $\ell_1$ test statistic, remain unchanged. Therefore, computing the
selective p-value requires an explicit characterization of the set of data
vectors satisfying these conditioning constraints. Since directly
characterizing this event in the original high-dimensional space is
intractable, we show that it can be reduced to a one-dimensional truncation
problem along the affine line induced by the selective inference formulation.

Let us define the set of vectors $\bm Y$ satisfying the conditions in \eqref{eq:selective_pvalue} as
\begin{equation}
\cD
=
\Bigl\{
\bm Y \in \mathbb{R}^{(m+1)D} \mid 
\cA(\vX^{\rm test})=\cA(\bm x^{\rm test}),
\,
\cS(\bm Y)=\cS(\bm y), 
\, 
\cQ(\bm Y) = \cQ(\bm y)
\Bigr\}.
\label{eq:condition_event_set}
\end{equation}

\begin{theorem} \label{thm:data_line}
The set \(\mathcal{D}\) in \eqref{eq:condition_event_set} can be expressed as
\begin{align} \label{eq:condition_event_set_line}
    \mathcal{D}
    =
    \left\{
    \bm Y (z)
    =
    \bm a + \bm b z
    \;\middle|\;
    z \in \mathcal{Z}
    \right\},
\end{align}
where \(\bm a =  \cQ(\bm y)\), \(\bm b\) is defined in
\eqref{eq:Q_and_b_main}, and
\begin{align} \label{eq:cZ}
    \mathcal{Z}
    =
    \left\{
    z \in \mathbb{R}
    \;\middle|\;
    \cA(\vX^{\rm test}(z))=\cA(\bm x^{\rm test}),
\,
\cS(\bm Y (z))=\cS(\bm y) 
    \right\}.
\end{align}
Here, we note that \(\vX^{\rm test}(z)\) corresponds to the first \(D\)
components of the vector \(\bm Y(z)\).
\end{theorem}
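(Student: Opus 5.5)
We prove the two inclusions separately. The key identity is the decomposition already noted after \eqref{eq:Q_and_b_main}: for every $\bm Y \in \mathbb{R}^{(m+1)D}$,
\[
\bm Y = \cQ(\bm Y) + \vb\,\veta^\top \bm Y .
\]
This holds because $\cQ(\bm Y) = \bm Y - \vb\veta^\top\bm Y$ by definition.

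Throughout, $\veta$ is treated as the fixed vector built from the observed sign pattern $\cS(\bm y)$. This is legitimate because $\cD$ imposes $\cS(\bm Y)=\cS(\bm y)$, so on $\cD$ the direction vector in \eqref{eq:eta} coincides with the one built from $\bm y$. Consequently $\vb$ and $\bm a=\cQ(\bm y)$ are fixed vectors as well.

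\textbf{Inclusion $\mathcal{D}\subseteq\{\bm a+\vb z : z\in\mathcal{Z}\}$.} Take $\bm Y\in\cD$ and set $z=\veta^\top\bm Y$. Since $\cQ(\bm Y)=\cQ(\bm y)=\bm a$, the decomposition gives $\bm Y=\bm a+\vb z=\bm Y(z)$. The first $D$ coordinates of $\bm Y(z)$ are $\vX^{\rm test}(z)=\vX^{\rm test}$. The remaining two conditions defining $\cD$, namely $\cA(\vX^{\rm test})=\cA(\bm x^{\rm test})$ and $\cS(\bm Y)=\cS(\bm y)$, therefore read $\cA(\vX^{\rm test}(z))=\cA(\bm x^{\rm test})$ and $\cS(\bm Y(z))=\cS(\bm y)$. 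These say exactly that $z\in\mathcal{Z}$.

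\textbf{Inclusion $\{\bm a+\vb z : z\in\mathcal{Z}\}\subseteq\mathcal{D}$.} Take $z\in\mathcal{Z}$ and $\bm Y=\bm a+\vb z$. The conditions on $\cA$ and $\cS$ hold by the definition of $\mathcal{Z}$. It remains to check $\cQ(\bm Y)=\cQ(\bm y)$, which is the one step needing a computation. We need two facts.

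First, $\veta^\top\vb=1$. This follows directly from the definition of $\vb$ in \eqref{eq:Q_and_b_main}, because $\veta^\top\tilde\Sigma\veta>0$ whenever $\Sigma$ is positive definite and $\veta\neq 0$.

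Second, $(I-\vb\veta^\top)$ is idempotent, since
\[
(I-\vb\veta^\top)^2 = I - 2\vb\veta^\top + \vb(\veta^\top\vb)\veta^\top = I-\vb\veta^\top .
\]
It also annihilates $\vb$: $(I-\vb\veta^\top)\vb = \vb - \vb(\veta^\top\vb) = \bm 0$.

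Using both facts,
\[
\cQ(\bm Y) = (I-\vb\veta^\top)\bm a + z\,(I-\vb\veta^\top)\vb = (I-\vb\veta^\top)^2\bm y + \bm 0 = (I-\vb\veta^\top)\bm y = \cQ(\bm y).
\]
Hence $\bm Y\in\cD$, and the two inclusions together give \eqref{eq:condition_event_set_line}.

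The proof is essentially linear algebra. The only subtle point is that $\veta$ depends on $\bm Y$ through its sign pattern. This is resolved by the conditioning on $\cS$: in both directions, $\cS(\bm Y)=\cS(\bm y)$ holds for the vectors under consideration, so using the fixed $\veta$ built from $\bm y$ is consistent with \eqref{eq:eta} throughout.
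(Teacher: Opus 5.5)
Your proof is correct and follows the same route as the paper: both use the identity $\bm Y=\cQ(\bm Y)+\vb\,\veta^\top\bm Y$, with $\veta$ fixed by the sign-pattern condition, so that $\cQ(\bm Y)=\cQ(\bm y)$ forces $\bm Y$ onto the line $\va+\vb z$ and the remaining conditions define $\mathcal{Z}$. Your proof is slightly more complete than the paper's, which leaves the reverse inclusion implicit; you verify it explicitly by checking $\cQ(\va+\vb z)=\cQ(\bm y)$ via $\veta^\top\vb=1$ and the idempotence of $I-\vb\veta^\top$.
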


\begin{proof}
The proof is provided in Appendix~\ref{app:proof_data_line}.
\end{proof}

Theorem~\ref{thm:data_line} shows that the SI problem can be
reduced from the original high-dimensional space to the scalar parameter space
\(\mathcal{Z}\). Consequently, rather than analyzing the entire
\((m + 1)D\)-dimensional space, it is sufficient to characterize
the truncation region \(\mathcal{Z}\) in a one dimensional space. Once \(\mathcal{Z}\) is obtained, the
selective \(p\)-value can be computed directly.

\subsection{Identification of the truncation region \(\mathcal{Z}\)}
\label{subsec:characterization_Z}
\label{subsec:characterization_main}

To compute the selective $p$-value in \eqref{eq:selective_pvalue}, we must
identify the truncation region $\mathcal{Z}$ defined in \eqref{eq:cZ}.
However, this region cannot be determined directly due to the complexity of the Deep SVDD selection event.
To address this, we exploit the piecewise-linear structure of the encoder to
 identify $\mathcal{Z}$ through as follows:
\begin{itemize}
\item We decompose $\mathcal{Z}$ into two sub-problems: a sign-pattern
  constraint $\mathcal{Z}_{\mathrm{sign}}$ and a Deep SVDD
  anomaly-selection constraint $\mathcal{Z}_{\mathrm{AD}}$.
\item We show that $\mathcal{Z}_{\mathrm{sign}}$ reduces to $D$ linear
  inequalities in the scalar~$z$, and that $\mathcal{Z}_{\mathrm{AD}}$
  reduces to a quadratic inequality on each affine region of the frozen
  encoder.
\item We construct $\mathcal{Z}$ by intersecting
  $\mathcal{Z}_{\mathrm{sign}}$ with the union of the local solutions
  across all affine regions intersected by the one-dimensional path
  $\vX^{\mathrm{test}}(z)$.
\end{itemize}

\begin{assumption}\label{assump:pwa}
Following \citet{niihori2025quantifying}, we assume that the frozen encoder
$\hat{\phi}:\mathbb{R}^D\to\mathbb{R}^p$ is a  piecewise-affine function. That is, the input space $\mathbb{R}^D$ can be partitioned
into finitely many polyhedral regions, and on each region $\mathcal{P}$ the
encoder acts as an affine map
$\hat{\phi}(\vx) = \mL_{\mathcal P}\vx+\vbeta_{\mathcal P}$ for
$\vx\in\mathcal P$, where
$\mL_{\mathcal P}\in\mathbb{R}^{p\times D}$ and
$\vbeta_{\mathcal P}\in\mathbb{R}^p$ are fixed for each region
$\mathcal{P}$.
\end{assumption}

\begin{remark}
Assumption~\ref{assump:pwa} is satisfied by neural networks composed of affine
layers (fully connected, convolution, batch normalization in inference mode) and
piecewise-linear activations (ReLU, LeakyReLU) together with max-pooling.
Since the Deep SVDD encoder used in this work consists exclusively of such
layers, this assumption holds by construction.
\end{remark}

\paragraph{Decomposition of $\mathcal{Z}$.}
By Theorem~\ref{thm:data_line}, after conditioning on the nuisance sufficient
statistic, the data vector $\bm Y$ is restricted to the one-dimensional affine
line $\bm Y(z) = \bm a + \bm b z$, and the test sample varies as
$\vX^{\mathrm{test}}(z)=\bm a^{\mathrm{test}}+\bm b^{\mathrm{test}}z$.
We decompose the truncation region as
\begin{equation}\label{eq:Z_decompose}
\mathcal{Z}
=
\mathcal{Z}_{\mathrm{sign}}
\cap
\mathcal{Z}_{\mathrm{AD}},
\end{equation}
where $\mathcal{Z}_{\mathrm{sign}}$ enforces the sign pattern used to
linearize the $\ell_1$-discrepancy, and $\mathcal{Z}_{\mathrm{AD}}$ enforces
the Deep SVDD anomaly-selection event
$\cA(\vX^{\mathrm{test}}(z))=\cA(\bm x^{\rm test})$.
We characterize $\mathcal{Z}_{\mathrm{sign}}$ and $\mathcal{Z}_{\mathrm{AD}}$
through the following two lemmas.

\begin{lemma}[Sign-pattern constraint] \label{lem:z_sign}
The sign-feasible set $\mathcal{Z}_{\mathrm{sign}}$ is characterized by a set of $D$ linear inequalities with respect to $z$, and can be obtained as a single interval.
\end{lemma}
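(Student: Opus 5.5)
The plan is to write the sign-pattern condition $\cS(\bm Y(z))=\cS(\bm y)$ coordinatewise along the line of Theorem~\ref{thm:data_line}. Each coordinate gives one linear inequality in $z$, and intersecting the resulting half-lines yields a single interval.

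First, partition $\bm a$ and $\bm b$ into $m+1$ blocks of length $D$, mirroring the stacking in \eqref{eq:stacked_general}: $\bm a=\operatorname{vec}(\bm a^{\rm test},\bm a^{1,\rm ref},\dots,\bm a^{m,\rm ref})$, and similarly for $\bm b$. The difference entering \eqref{eq:sign_mapping} is then affine in $z$:
\begin{align*}
\vX^{\rm test}(z)-\bar{\vX}^{\rm ref}(z)=\bm\gamma+\bm\delta z,\qquad \bm\gamma=\bm a^{\rm test}-\tfrac1m\textstyle\sum_{j}\bm a^{j,\rm ref},\quad \bm\delta=\bm b^{\rm test}-\tfrac1m\textstyle\sum_j \bm b^{j,\rm ref}.
\end{align*}
Write $s_d$ for the $d$-th entry of the observed pattern $\cS(\bm y)\in\{-1,1\}^D$. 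The condition $\cS(\bm Y(z))=\cS(\bm y)$ is then equivalent to the $D$ inequalities $s_d(\gamma_d+\delta_d z)\ge 0$ for $d\in[D]$. I will adopt the convention that $\operatorname{sign}(0)$ agrees with $s_d$; ties have probability zero, so strict versus non-strict inequalities do not affect the $p$-value. These are the $D$ linear inequalities claimed in Lemma~\ref{lem:z_sign}.

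Next, solve each inequality separately. If $s_d\delta_d>0$, the solution set is $[-\gamma_d/\delta_d,\infty)$. If $s_d\delta_d<0$, it is $(-\infty,-\gamma_d/\delta_d]$. If $\delta_d=0$, the inequality does not involve $z$, so it holds for all $z$ or for none. In fact it holds for all $z$: $\bm y$ itself lies on the line at $z_{\rm obs}=\veta^\top\bm y$, because $\bm a+\bm b\,\veta^\top\bm y=\bm y$ by \eqref{eq:Q_and_b_main}. Therefore $s_d\gamma_d=s_d(\gamma_d+\delta_d z_{\rm obs})\ge0$.

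Intersecting the per-coordinate solution sets gives
\begin{align*}
\mathcal Z_{\rm sign}=\Bigl[\max_{d:\,s_d\delta_d>0}\bigl(-\gamma_d/\delta_d\bigr),\ \min_{d:\,s_d\delta_d<0}\bigl(-\gamma_d/\delta_d\bigr)\Bigr],
\end{align*}
with the conventions $\max\emptyset=-\infty$ and $\min\emptyset=+\infty$. This is an intersection of half-lines, hence convex, hence a single interval. The same observation that $z_{\rm obs}$ lies in every half-line shows the interval is nonempty.

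The only delicate point is the bookkeeping: the blockwise averaging that produces $\bm\gamma$ and $\bm\delta$, and handling the degenerate case $\delta_d=0$ without losing nonemptiness. I would settle both by invoking the identity $\bm Y(z_{\rm obs})=\bm y$ from Theorem~\ref{thm:data_line}. Since $\veta^\top\bm b=1$, this identity also lets one check directly that $\veta^\top\bm Y(z)=z$. That check confirms the interval is the correct truncation range for the test statistic.
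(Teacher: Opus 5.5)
Your proposal is correct and follows the same route as the paper's proof (Appendix~\ref{appsubsec:Z_sign}). Both split $\bm a,\bm b$ into blocks, write $\vX^{\rm test}(z)-\bar{\vX}^{\rm ref}(z)$ as an affine function of $z$, impose one linear inequality per coordinate, and intersect the resulting half-lines. You add two details the paper omits: the zero-slope case with nonemptiness via $\bm Y(z_{\rm obs})=\bm y$, and non-strict inequalities where the paper uses strict ones, which changes only the interval endpoints and not the $p$-value.
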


\begin{proof}[Proof]
The sign pattern $\cS(\bm Y(z))=\cS(\bm y)$ requires, for each coordinate
$u=1,\dots,D$:
\[
\cS_u(\bm y)
\left(
\alpha_u + \gamma_u z
\right) > 0,
\]
where $\alpha_u$ and $\gamma_u$ are the intercept and slope of
$X_u^{\mathrm{test}}(z)-\bar{X}_u^{\mathrm{ref}}(z)$, respectively.
Each coordinate produces one linear inequality in $z$, yielding a single intersection interval. The detailed derivation is provided in Appendix~\ref{appsubsec:Z_sign}.
\end{proof}

\begin{lemma}[Deep SVDD selection constraint] \label{lem:z_ad}
Let $\mathfrak{P}_{\mathrm{line}}$ denote the finite collection of affine
regions of $\hat{\phi}$ that are intersected by the path
$\vX^{\mathrm{test}}(z)$ as $z$ varies over $\mathbb{R}$.
Under Assumption~\ref{assump:pwa}, on each affine region $\mathcal{P}\in\mathfrak{P}_{\mathrm{line}}$, the Deep SVDD anomaly-selection event reduces to a scalar quadratic inequality in $z$.
\end{lemma}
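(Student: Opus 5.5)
The plan is to restrict the selection rule to the line $\vX^{\mathrm{test}}(z)=\bm a^{\mathrm{test}}+\bm b^{\mathrm{test}}z$ from Theorem~\ref{thm:data_line} and use the piecewise-affine structure from Assumption~\ref{assump:pwa} on each region. For a fixed region $\mathcal{P}\in\mathfrak{P}_{\mathrm{line}}$, set
\[
\mathcal{I}_{\mathcal P}=\{z\in\mathbb{R}\mid \bm a^{\mathrm{test}}+\bm b^{\mathrm{test}}z\in\mathcal{P}\}.
\]
Since $\mathcal{P}$ is a polyhedron $\{\vx \mid \mG_{\mathcal P}\vx\le \vh_{\mathcal P}\}$ and the path is affine in $z$, $\mathcal{I}_{\mathcal P}$ is the solution set of finitely many linear inequalities in $z$. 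It is therefore a (possibly empty or unbounded) interval. Finiteness of $\mathfrak{P}_{\mathrm{line}}$ follows because there are only finitely many regions in total, and the intervals $\{\mathcal{I}_{\mathcal P}\}$ cover $\mathbb{R}$.

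On $\mathcal{I}_{\mathcal P}$ the encoder acts as $\hat\phi(\vx)=\mL_{\mathcal P}\vx+\vbeta_{\mathcal P}$, so
\[
\hat\phi(\vX^{\mathrm{test}}(z))-\hat{\vc}=\vu_{\mathcal P}+\bm v_{\mathcal P}z,
\]
with $\vu_{\mathcal P}=\mL_{\mathcal P}\bm a^{\mathrm{test}}+\vbeta_{\mathcal P}-\hat{\vc}$ and $\bm v_{\mathcal P}=\mL_{\mathcal P}\bm b^{\mathrm{test}}$. Substituting into \eqref{eq:score_general} gives
\[
g(\vX^{\mathrm{test}}(z))=\|\bm v_{\mathcal P}\|_2^2z^2+2\vu_{\mathcal P}^\top\bm v_{\mathcal P}z+\|\vu_{\mathcal P}\|_2^2 .
\]
The event $\cA(\vX^{\mathrm{test}}(z))=\cA(\bm x^{\mathrm{test}})$ is then equivalent to $g(\vX^{\mathrm{test}}(z))-\tau\ge 0$ when $\cA(\bm x^{\rm test})=1$, and to $g(\vX^{\mathrm{test}}(z))-\tau<0$ otherwise. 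Either way it is a scalar quadratic inequality in $z$ with coefficients $A_{\mathcal P}=\|\bm v_{\mathcal P}\|_2^2$, $B_{\mathcal P}=2\vu_{\mathcal P}^\top\bm v_{\mathcal P}$, and $C_{\mathcal P}=\|\vu_{\mathcal P}\|_2^2-\tau$, restricted to $z\in\mathcal{I}_{\mathcal P}$.

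The solution set can then be written via the roots. If $A_{\mathcal P}>0$, it is a union of at most two intervals, determined by the discriminant. If $A_{\mathcal P}=0$, then $\bm v_{\mathcal P}=\bm 0$, so $B_{\mathcal P}=0$ and the inequality is constant in $z$. Intersecting with $\mathcal{I}_{\mathcal P}$ and taking the union over $\mathfrak{P}_{\mathrm{line}}$ yields $\mathcal{Z}_{\mathrm{AD}}$.

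The main obstacle is making the reduction rigorous at region boundaries, where $z$ lies in two adjacent $\mathcal{I}_{\mathcal P}$. Because $\hat\phi$ is continuous (ReLU and max-pool compositions are), both affine pieces agree on shared boundaries, so the choice of region there does not affect $g$. Boundary points also form a finite set of measure zero for the truncated Gaussian, so they do not matter for the $p$-value.

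Two further issues are secondary. The degenerate case $A_{\mathcal P}=0$ must be handled separately, as above. The second is the claim that the path meets only finitely many regions. I would argue this from the finite partition in Assumption~\ref{assump:pwa} together with convexity: the preimage of a convex polyhedron under an affine map of $z$ is a single interval, so each region contributes one interval.
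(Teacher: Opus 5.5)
Your proposal is correct and follows essentially the same route as the paper: on each region $\mathcal{P}$ you substitute the affine encoder $\mL_{\mathcal P}(\va^{\mathrm{test}}+\vb^{\mathrm{test}}z)+\vbeta_{\mathcal P}$ into the score, expand $\|\vu_{\mathcal P}+\bm{v}_{\mathcal P}z\|_2^2$ to obtain the same quadratic coefficients the paper calls $\kappa_2,\kappa_1,\kappa_0$, and reduce selection to $g_{\mathcal P}(z)\ge\tau$. The additional points you treat go beyond what the paper writes out but are consistent with it and only add rigor: the non-selected case $\mathcal{A}=0$, the degenerate case $\bm{v}_{\mathcal P}=\bm{0}$, boundary points shared by adjacent regions, and finiteness of $\mathfrak{P}_{\mathrm{line}}$.
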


\begin{proof}[Proof]
By Assumption~\ref{assump:pwa}, for $\vX^{\mathrm{test}}(z)\in\mathcal{P}$, the encoder output is affine in $z$: $\hat{\phi}(\vX^{\mathrm{test}}(z)) = \mL_{\mathcal{P}}(\va^{\mathrm{test}} + \vb^{\mathrm{test}}z) + \vbeta_{\mathcal{P}}$. Substituting this into the anomaly score definition $g(\vX^{\mathrm{test}}(z)) = \|\hat{\phi}(\vX^{\mathrm{test}}(z)) - \hat{\vc}\|_2^2$ yields the quadratic function of $z$:
\begin{equation}\label{eq:score_quadratic}
g_{\mathcal{P}}(z)
=
\|\vu_0(\mathcal{P})+\vu_1(\mathcal{P})z\|_2^2
=
\kappa_2(\mathcal{P}) z^2+\kappa_1(\mathcal{P}) z+\kappa_0(\mathcal{P}),
\end{equation}
where $\vu_0(\mathcal{P}) = \mL_{\mathcal{P}}\va^{\mathrm{test}} + \vbeta_{\mathcal{P}} - \hat{\vc}$ and $\vu_1(\mathcal{P}) = \mL_{\mathcal{P}}\vb^{\mathrm{test}}$ are constant vectors determined by the affine map on region $\mathcal{P}$, and the coefficients are $\kappa_2(\mathcal{P})=\|\vu_1(\mathcal{P})\|_2^2$, $\kappa_1(\mathcal{P})=2\vu_0(\mathcal{P})^\top\vu_1(\mathcal{P})$, $\kappa_0(\mathcal{P})=\|\vu_0(\mathcal{P})\|_2^2$. The anomaly-selection event $g(\vX^{\mathrm{test}}(z))\ge\tau$ restricted to region $\mathcal{P}$ therefore reduces to the quadratic inequality $g_{\mathcal{P}}(z) \ge \tau$. The explicit expressions and derivations are provided in Appendix~\ref{appsubsec:Z_region} and \ref{appsubsec:Z_score}.
\end{proof}

\begin{remark}
A similar piecewise-affine characterization for autoencoder-based models in the context of Selective Inference has been recently studied by \citet{kiet2026statistical}. 
Although formulated for Deep SVDD, the quadratic-constraint argument in Lemma~\ref{lem:z_ad} does not depend on the Deep SVDD training objective and applies more generally to any fixed encoder satisfying Assumption~\ref{assump:pwa} whose anomaly decision is obtained by comparing the \(\ell_2\) distance between its representation and a fixed center with a fixed nonnegative threshold.
\end{remark}

\paragraph{Combined truncation region.}
Combining the results from Lemma~\ref{lem:z_sign} and Lemma~\ref{lem:z_ad}
via \eqref{eq:Z_decompose}, we now construct the full truncation region.
Since the test sample $\vX^{\mathrm{test}}(z)$ may pass through different
affine regions as $z$ varies, the anomaly-selection constraint
$\mathcal{Z}_{\mathrm{AD}}$ is obtained by considering each region
$\mathcal{P}\in\mathfrak{P}_{\mathrm{line}}$ separately. For each such
region, define:

\begin{itemize}
\item $\mathcal{Z}_{\mathrm{region}}(\mathcal{P})$: the set of $z$-values
  for which $\vX^{\mathrm{test}}(z)$ lies in $\mathcal{P}$ (determined by
  linear inequalities; see Appendix~\ref{appsubsec:Z_region});
\item $\mathcal{Z}_{\mathrm{score}}(\mathcal{P})$: the set of $z$-values
  satisfying the quadratic anomaly-score constraint
  $g_{\mathcal{P}}(z)\ge\tau$ from \eqref{eq:score_quadratic}.
\end{itemize}
The full truncation region is then
\begin{equation}\label{eq:Z_combined}
\mathcal{Z}
=
\mathcal{Z}_{\mathrm{sign}}
\cap
\bigcup_{\mathcal{P}\in\mathfrak{P}_{\mathrm{line}}}
\left(
\mathcal{Z}_{\mathrm{region}}(\mathcal{P})
\cap
\mathcal{Z}_{\mathrm{score}}(\mathcal{P})
\right).
\end{equation}

A geometric illustration of the truncation region $\mathcal{Z}$ on the
one-dimensional line parameterized by $z$ is shown in
Fig.~\ref{fig:truncation_region}.
%
%
Because $\mathfrak{P}_{\mathrm{line}}$ is finite and the sets in Eq.~\eqref{eq:Z_combined} are defined by finitely many scalar linear or quadratic inequalities, $\mathcal{Z}$ is a finite union of intervals. After merging all overlapping interval pieces, we write
\begin{equation}\label{eq:Z_intervals}
\mathcal{Z}
=
\bigcup_{\ell=1}^{K}
\left[\underline{z}_{\ell},\overline{z}_{\ell}\right],
\qquad K<\infty.
\end{equation}
The resulting intervals are pairwise disjoint, and $K$ is the number of intervals remaining after merging. Thus, $K$ is obtained directly during the construction of $\mathcal{Z}$.

\textbf{Relation to over-conditioning.} Let $\mathcal{P}_{\mathrm{obs}}\in \mathfrak{P}_{\mathrm{line}}$ denote the affine region containing the observed test sample $\vX^{\mathrm{test}}$. The over-conditioning (OC) baseline additionally conditions on this observed affine region, leading to the truncation region
\[
\mathcal{Z}_{\mathrm{OC}}
=
\mathcal{Z}_{\mathrm{sign}}
\cap
\mathcal{Z}_{\mathrm{region}}(\mathcal{P}_{\mathrm{obs}})
\cap
\mathcal{Z}_{\mathrm{score}}(\mathcal{P}_{\mathrm{obs}}).
\]
In contrast, PADI does not condition on the affine-region identity and instead aggregates all affine regions intersected by $\vX^{\mathrm{test}}(z)$, as in \eqref{eq:Z_combined}. Hence, $\mathcal{Z}_{\mathrm{OC}}\subseteq \mathcal{Z}$. OC is computationally simpler because it considers only the observed affine region, whereas PADI requires a line search across multiple affine regions. This reflects a computational--statistical trade-off: OC introduces additional conditioning that may reduce statistical power, while PADI avoids this conditioning at the cost of additional computation. This distinction is analogous to the single-polyhedron versus union-of-polyhedra cases discussed by \citet{lee2016exact}.

Finally, with the full truncation region
\(\mathcal{Z}\) represented as the finite union of intervals in
Eq.~\eqref{eq:Z_intervals}, the selective \(p\)-value in
Eq.~\eqref{eq:selective_pvalue} can be evaluated as the two-sided tail
probability of \(z=\veta^\top\bm Y\) under its null Gaussian distribution
truncated to \(\mathcal{Z}\).

\begin{figure*}[t]
    \centering
    \includegraphics[width=.9\textwidth]{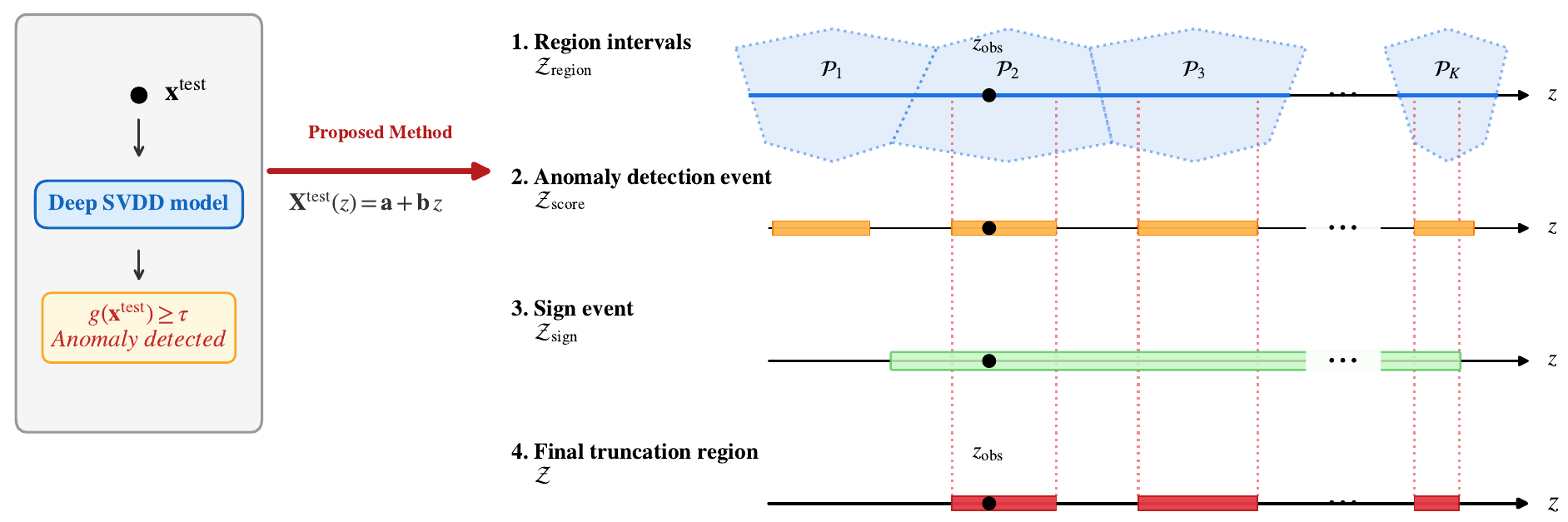}
    \caption{
        \textbf{Geometric illustration of the truncation region $\mathcal{Z}$ on the 1D line parametrized by $z$.} 
        The final region is constructed through four steps: 
        (1) $\mathcal{Z}_{\mathrm{region}}$ (blue) identifies the intervals where the line intersects the affine regions $\mathcal{P}_1, \dots, \mathcal{P}_K$. 
        (2) $\mathcal{Z}_{\mathrm{score}}$ (orange) defines the intervals where the quadratic anomaly score exceeds the threshold $\tau$, computed independently per region. 
        (3) $\mathcal{Z}_{\mathrm{sign}}$ (green) enforces the global sign-pattern constraint required to linearize the $\ell_1$-norm discrepancy. 
        (4) The final truncation region $\mathcal{Z}$ (red) is the intersection of the sign constraint with the union of region-specific valid scores. The observed test sample $z_{\mathrm{obs}}$ lies within this final valid set.
    }
    \label{fig:truncation_region}
\end{figure*}

\subsection{Algorithmic Summary of PADI}

Sections 3.1--3.4 establish the statistical formulation and characterize the truncation region required for PADI. For clarity and reproducibility, Algorithm 1 summarizes the complete test-time procedure from a Deep SVDD anomaly decision to the selective $p$-value.

\begin{algorithm}[t]
{\small
\caption{Post-Anomaly Detection Inference (PADI) for Deep SVDD}
\label{alg:padi}

\textbf{Input:}
test instance $\vx^{\mathrm{test}}$;
independent normal reference instances
$\{\vx^{j,\mathrm{ref}}\}_{j=1}^m$;
frozen Deep SVDD encoder $\hat{\phi}$, fixed center $\hat{\vc}$,
fixed threshold $\tau$;
covariance matrix $\mSigma$; significance level $\alpha$. \\

\textbf{Output:}
selective $p$-value $p^{\mathrm{selective}}$ and significance decision
if $\vx^{\mathrm{test}}$ is selected as anomalous; otherwise,
``not selected as anomalous.''

\begin{algorithmic}[1]
\State Compute
$g_{\mathrm{obs}}
\gets
g(\vx^{\mathrm{test}})
=
\lVert
\hat{\phi}(\vx^{\mathrm{test}})-\hat{\vc}
\rVert_2^2$.

\If{$g_{\mathrm{obs}} < \tau$}
    \State \Return ``not selected as anomalous''
\EndIf

\vspace{4pt}
\Statex \hspace{-12pt}
\emph{// Phase I: Construct the one-dimensional SI line}

\State Set
${\bar{\vx}}^{\mathrm{ref}}
\gets
\frac{1}{m}\sum_{j=1}^{m}\vx^{j,\mathrm{ref}}$.

\State Form
$\vy
\gets
\operatorname{vec}
(\vx^{\mathrm{test}},
 \vx^{1,\mathrm{ref}},
 \dots,
 \vx^{m,\mathrm{ref}})$
and compute
$\cS(\vy)
=
\operatorname{sign}
(\vx^{\mathrm{test}}-{\bar{\vx}}^{\mathrm{ref}})$.

\State Construct $\veta$ from $\cS(\vy)$ as in \eqref{eq:eta}.

\State Set
$\tilde{\mSigma}
\gets
I_{m+1}\otimes\mSigma$,
$\vb
\gets
\tilde{\mSigma}\veta/
(\veta^\top\tilde{\mSigma}\veta)$,
$z_{\mathrm{obs}}
\gets
\veta^\top\vy$,
and
$\va
\gets
\cQ(\vy)
=
\vy-\vb z_{\mathrm{obs}}$.

\State Define
$\bm Y(z)\gets\va+\vb z$
and extract $\vX^{\mathrm{test}}(z)$.

\vspace{4pt}
\Statex \hspace{-12pt}
\emph{// Phase II: Identify the truncation region}

\State Compute
$\mathcal Z_{\mathrm{sign}}$
from the $D$ sign inequalities in Lemma~\ref{lem:z_sign}.

\State Initialize
$\mathcal Z_{\mathrm{AD}}\gets\emptyset$.

\For{each affine region
$\mathcal P\in\mathfrak P_{\mathrm{line}}$
encountered by the line-search procedure}

    \State Determine
    $\mathcal Z_{\mathrm{region}}(\mathcal P)$.

    \State Form the local quadratic score
    $g_{\mathcal P}(z)$
    as in \eqref{eq:score_quadratic}.

    \State Compute
    $\mathcal Z_{\mathrm{score}}(\mathcal P)
    \gets
    \{z\in\mathbb{R}:g_{\mathcal P}(z)\ge\tau\}$.

    \State
    $\mathcal Z_{\mathrm{AD}}
    \gets
    \mathcal Z_{\mathrm{AD}}
    \cup
    \bigl(
    \mathcal Z_{\mathrm{region}}(\mathcal P)
    \cap
    \mathcal Z_{\mathrm{score}}(\mathcal P)
    \bigr)$.

\EndFor

\State Set
$\mathcal Z
\gets
\mathcal Z_{\mathrm{sign}}
\cap
\mathcal Z_{\mathrm{AD}}$
and represent $\mathcal Z$ as a finite union of disjoint intervals.

\vspace{4pt}
\Statex \hspace{-12pt}
\emph{// Phase III: Compute p-value}

%
\State 
Compute
$
p^{\mathrm{selective}}
$

\State \Return
$p^{\mathrm{selective}}$
and reject $H_0$ if
$p^{\mathrm{selective}}\le\alpha$.

\end{algorithmic}
}
\end{algorithm}

\section{Extension to Deep Semi-Supervised Anomaly Detection}
\label{sec:extension_deep_sad}

The proposed method also applies to Deep Semi-Supervised
Anomaly Detection (Deep SAD) \citep{ruff2019deep}. Deep SAD differs from Deep
SVDD only during training: it incorporates a small amount of labeled data,
encouraging known anomalies to be mapped far from the latent center and known
normal samples to be mapped close to it. After training, the frozen Deep SAD
detector uses exactly the same scoring rule as Deep SVDD: a test sample is
assigned the anomaly score
$g(\vx^{\mathrm{test}})=\|\hat{\phi}(\vx^{\mathrm{test}})-\hat{\vc}\|_2^2$
and is declared anomalous when $g(\vx^{\mathrm{test}})\ge\tau$.
Because the functional form of the anomaly score and the selection rule are
identical to those of Deep SVDD, the post-selection inference problem has the
same structure once the encoder, center, and threshold are frozen.
Specifically, the inferential target, the sign-pattern conditioning, the
nuisance conditioning, and the one-dimensional affine-line reduction
(Theorem~\ref{thm:data_line}) all remain unchanged. The only difference is
that the frozen encoder $\hat{\phi}$ is obtained from Deep SAD training rather
than Deep SVDD training; the anomaly-selection event and its mathematical
characterization are of the same form.

Under Assumption~\ref{assump:pwa}, the frozen Deep SAD encoder is
piecewise affine, so the anomaly score again becomes a quadratic function of
$z$ within each affine region along the nuisance-conditioned line.
Consequently, the Deep SAD truncation region, denoted by
$\mathcal{Z}_{\mathrm{SAD}}$, is a finite union of intervals. Thus, the Deep SAD case can be handled by the same procedure summarized in Algorithm~1, with the frozen Deep SVDD encoder replaced by the frozen Deep SAD encoder and $\mathcal{Z}$ replaced by $\mathcal{Z}_{\mathrm{SAD}}$.
\begin{corollary}
Under the Gaussian test-reference model and the frozen piecewise-affine encoder
assumption, the Deep SAD selective p-value satisfies
\[
    \mathbb P_{\rm H_0}
    \left(
        p^{\mathrm{selective}}_{\mathrm{SAD}} \le \alpha
    \right)
    =
    \alpha,
    \qquad
    \forall \alpha\in[0,1].
\]
\end{corollary}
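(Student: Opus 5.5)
The corollary will follow by showing that the proof of Theorem~\ref{thm:validity_main} never uses anything specific to Deep SVDD training, so it transfers verbatim once $\hat{\phi}$ is the frozen Deep SAD encoder. First, I would fix the Deep SAD encoder $\hat{\phi}$, center $\hat{\vc}$ and threshold $\tau$. I would argue that these are deterministic with respect to the random vector $\bm Y$, because they are trained on data independent of the test and reference samples. The selection map $\cA(\vx)=\mathbb{I}\{\|\hat{\phi}(\vx)-\hat{\vc}\|_2^2\ge\tau\}$ is then a fixed measurable function of $\vX^{\rm test}$, and it has the same form as in the Deep SVDD case. The sign map $\cS$, the direction $\veta$, and the nuisance statistic $\cQ(\bm Y)$ in \eqref{eq:Q_and_b_main} do not involve the encoder at all.

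Second, I would reproduce the core argument. Under ${\rm H}_0$, $\bm Y\sim\mathcal N(\bm\mu,\tilde{\mSigma})$ with $\bm\mu=\mathbf 1_{m+1}\otimes\vs^{\rm ref}$. For any fixed sign vector $\bm s\in\{-1,1\}^D$, let $\veta_{\bm s}$ be the corresponding contrast. Then $\veta_{\bm s}^\top\bm\mu=\bm s^\top\vs^{\rm ref}-\frac{1}{m}\cdot m\,\bm s^\top\vs^{\rm ref}=0$, so $Z=\veta_{\bm s}^\top\bm Y\sim\mathcal N(0,\sigma^2)$ with $\sigma^2=\veta_{\bm s}^\top\tilde{\mSigma}\veta_{\bm s}$. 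By Gaussian orthogonality, $\mathrm{Cov}(Z,\cQ(\bm Y))=0$, so $Z$ is independent of $\cQ(\bm Y)$. Applying Theorem~\ref{thm:data_line} with the Deep SAD encoder, I would write the conditioning event as $\{Z\in\mathcal Z_{\mathrm{SAD}}\}$. Here $\mathcal Z_{\mathrm{SAD}}$ is a function of $\cQ(\bm Y)$ and $\bm s$ only, and by Lemma~\ref{lem:z_sign}, Lemma~\ref{lem:z_ad} and Assumption~\ref{assump:pwa} it is a finite union of intervals, so it is measurable and well defined. Hence, conditional on $\{\cA=a,\cS=\bm s,\cQ=\bm q\}$, the variable $Z$ follows $\mathcal N(0,\sigma^2)$ truncated to $\mathcal Z_{\mathrm{SAD}}(\bm q)$.

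Third, let $F$ be the two-sided truncated-tail function, so that $p^{\mathrm{selective}}_{\mathrm{SAD}}=1-F(|Z|)$. Because the truncated law is continuous, the probability integral transform gives $p^{\mathrm{selective}}_{\mathrm{SAD}}\mid(\cA,\cS,\cQ)\sim\mathrm{Unif}[0,1]$. Consequently $\mathbb P_{{\rm H}_0}(p\le\alpha\mid\cA,\cS,\cQ)=\alpha$ for every value of the conditioning variables. Taking the expectation over $(\cA,\cS,\cQ)$, by the tower property, yields the unconditional statement of the corollary.

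The main obstacle is making sure the conditioning event has positive probability and that $\mathcal Z_{\mathrm{SAD}}$ contains an interval of positive length around $z_{\rm obs}$, so that the truncated distribution is non-degenerate and continuous. To handle this, I would note that the set of boundary points is defined by finitely many linear and quadratic equalities. That set therefore has Lebesgue measure zero, so with probability one $z_{\rm obs}$ lies in the interior of $\mathcal Z_{\mathrm{SAD}}$.
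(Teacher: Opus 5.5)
Your proposal is correct and follows the same route as the paper. The paper gets the corollary by observing that the proof of Theorem~\ref{thm:validity_main} uses nothing specific to the Deep SVDD training objective: $\veta^\top\bm Y$ has a truncated Gaussian law given the selection, sign and nuisance events, the probability integral transform makes the $p$-value uniform, and one then marginalizes over $\cQ$ and over the selection and sign outcomes. Your version is just more explicit (fixing the sign vector first so that $\veta$ and $\cQ$ are deterministic linear maps, checking $\veta^\top\bm{\mu}=0$ under ${\rm H}_0$, and adding the null-set argument for non-degeneracy of the truncation set), and its only slip is notational: if $F$ is the two-sided tail function, then $p^{\mathrm{selective}}_{\mathrm{SAD}}=F(|Z|)$ rather than $1-F(|Z|)$, which does not affect the uniformity conclusion.
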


Thus, the Deep SAD case can be handled by the same PADI procedure, with the
Deep SVDD truncation region replaced by the corresponding Deep SAD truncation
region \(\mathcal Z_{\mathrm{SAD}}\).

\vspace{-5pt}
\section{GPU-Based Parallelization of PADI}
\label{sec:gpu_padi}
\vspace{-5pt}

PADI requires repeated forward propagation through the frozen Deep SVDD encoder
when identifying the selective truncation region. This line-search step is
computationally expensive because, for many candidate values of the scalar
parameter \(z\), PADI must propagate both the current input \(X\) and its affine
representation \(A+Bz\) through the encoder and update the feasible interval of
\(z\). To make this procedure practical for deep encoders, we implement the forward and interval-update operations using custom Numba-CUDA kernels. Our implementation follows the GPU-accelerated SI strategy of
STAND-DA~\citep{kiet2026statistical}. Specifically, for fully connected
encoders, we reuse the shared-memory tiled matrix multiplication kernel
\texttt{MatMulMat} and adapt the \texttt{siReLU} idea to LeakyReLU, resulting in
the \texttt{SILeakyReLU} kernel. The only modification is that inactive units are
scaled by the LeakyReLU negative slope rather than being set to zero.
The main extension in PADI is the support for convolutional Deep SVDD encoders.
For CNN encoders, fully connected kernels alone are insufficient because the
encoder contains convolution, batch normalization, and max-pooling operations.
We therefore implement additional CUDA kernels for
\texttt{Conv2D}, \texttt{BatchNorm}, \texttt{SIMaxPool}, and the final fully
connected layer. These kernels propagate \(X\), \(A\), and \(B\) consistently
through the frozen encoder. Linear or affine layers, such as convolution,
batch normalization in inference mode, and fully connected layers, only transform
the affine representation. Piecewise-affine layers, such as LeakyReLU and max
pooling, additionally induce selection events and therefore update the feasible
interval.

The proposed GPU implementation reduces the computational cost of PADI for
convolutional Deep SVDD encoders in three ways. First, the main affine operations
in CNNs are parallelized at the feature-map level. Each convolutional output
element is computed by a GPU thread from the corresponding input channels and
kernel window, while batch normalization in inference mode is applied
element-wise using fixed channel-wise parameters. Second, the outputs of
piecewise-affine CNN layers are updated in parallel. LeakyReLU is applied
simultaneously across feature-map elements, and max-pooling outputs are computed
simultaneously across pooling windows. Third, local selection constraints are
computed together with these parallel layer updates. Each thread not only
computes its assigned LeakyReLU output or max-pooling output, but also derives
the local constraint on \(z\) required to preserve the activation branch
or pooling selection. These local constraints are merged on the GPU to update the feasible interval,
avoiding repeated CPU-side scans over all feature-map elements or pooling
windows. Fig.~\ref{fig:gpu_padi} illustrates the overall GPU-based
parallelization of PADI, while the detailed CUDA operations for convolutional
encoders are provided in Appendix~\ref{app:gpu_padi_details}.

\begin{figure}[t]
    \centering
    \includegraphics[width=\linewidth]{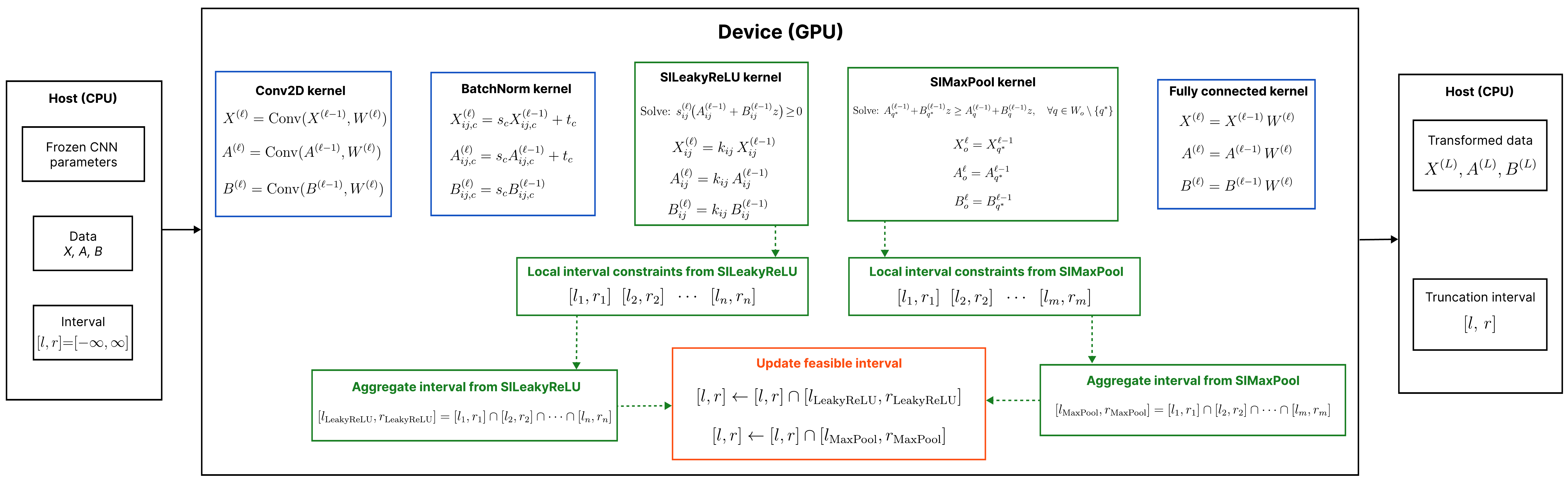}
    \caption{GPU-based parallelization of PADI for convolutional Deep SVDD encoders.
    Before the line-search computation, the frozen CNN parameters, the input data
    \(X,A,B\), and the current feasible interval are copied to GPU memory. The
    forward propagation is performed by custom Numba-CUDA kernels for
    \texttt{Conv2D}, \texttt{BatchNorm}, \texttt{SILeakyReLU}, \texttt{SIMaxPool},
    and the final fully connected layer. The \texttt{Conv2D}, \texttt{BatchNorm},
    and fully connected kernels propagate the forward values and affine coefficients
    \(X,A,B\). In addition, the \texttt{SILeakyReLU} and \texttt{SIMaxPool} kernels
    compute local interval constraints induced by the LeakyReLU branches
    and max-pooling indices. These local constraints are combined on the GPU to
    update the feasible interval. The final transformed data
    \(X^{(L)},A^{(L)},B^{(L)}\) and the final interval are then returned to the
    host.}
    \label{fig:gpu_padi}
    \vspace{-10pt}
\end{figure}

\vspace{-5pt}
\section{Experiments}
\label{sec:Experiments}

\vspace{-5pt}

In this section, we evaluate and compare the following methods:

- \textbf{PADI}: the proposed method for Deep SVDD.

- \textbf{OC}: 
an over-conditioning baseline that additionally conditions on the observed affine region of the frozen Deep SVDD encoder, with the truncation set constructed only from that region, analogous to $\mathcal{Z}_{\mathrm{OC}}$ described in Section~\ref{subsec:characterization_Z}.

- \textbf{Naive}: traditional statistical inference.

- \textbf{Op1}: an ablation study that excludes the sign-pattern constraint in Appendix~\ref{appsubsec:Z_sign}.

- \textbf{Op2}: another ablation study that excludes the anomaly detection event for Deep SVDD in Appendix~\ref{appsubsec:Z_score}.

A method that fails to control the FPR at the target significance level is regarded as statistically invalid, and its TPR is therefore not further considered. Throughout all experiments, we set the significance level to \(\alpha = 0.05\). The anomaly threshold $\tau$ for each trained Deep SVDD model is fixed to the empirical 95th percentile of the anomaly scores computed on the normal training data. This ensures that $\tau$ is determined independently of the test instances and reference samples used during the post-selection inference stage. The experiments for the extension to Deep SAD are provided in Appendix~\ref{app:dsad-experiments}.

\vspace{-5pt}
\subsection{Synthetic Data Experiments}
\label{subsec:synthetic}
\vspace{-5pt}

We conduct synthetic data experiments to evaluate both FPR control and TPR of the competing methods. We considered two types of covariance matrices: (i) \textbf{Independence:} \(\Sigma = I_d\), and  (ii) \textbf{Correlation:} \(\Sigma = \big[0.1^{|i-j|}\big]_{i,j} \in \mathbb{R}^{d \times d}.\)
The independence setting represents a simplified scenario where the noise
components across dimensions are uncorrelated. In contrast, the correlation
setting introduces dependencies among dimensions and provides a more challenging
scenario for evaluating the proposed inference procedure. Considering these two
settings allows us to examine whether the validity of PADI is maintained under
different covariance structures. In all synthetic experiments, the observed normal reference set used by the inference procedure is randomly sampled from an independent reference dataset generated from \(\mathcal{N}(\mathbf{0}_d, \Sigma)\), which is generated separately
from the test samples. For synthetic experiments, $\Sigma$ is known by construction and is directly used in the inference procedure. The neural network encoder used in these experiments has a three-layer architecture \([32,16,8]\) with Leaky ReLU activations, where the negative slope is set to \(0.01\). For the FPR experiment, we fix the data dimension at \(d=5\) and vary the sample size as \(n \in \{200,400,600,800\}\). For each value of \(n\), the data are generated from \(\mathcal{N}(\mathbf{0}_d, \Sigma)\). The experiment is repeated 1000 times to compute the empirical FPR at significance level \(\alpha=0.05\). For the TPR experiment, we fix \(d=5\) and \(n=100\), and consider \(\Delta \in \{1.5,2,2.5,3\}.\) For each value of \(\Delta\), we define \(\mu_{\Delta} = (\Delta,\ldots,\Delta)^\top \in \mathbb{R}^d\) and generate the data from \(\mathcal{N}(\mu_{\Delta},\Sigma)\). The experiment is repeated 1000 times to compute the empirical TPR. Since TPR is meaningful only for statistically valid methods, we report TPR results only for methods that successfully control the FPR in the preceding experiment.

The results are shown in Fig.~\ref{fig:synthetic-results}. In both the independent and correlated settings, PADI and OC successfully control the FPR around the significance level \(\alpha=0.05\) across all sample sizes. In contrast, the Naive approach produces substantially inflated FPR values in
both covariance settings. This behavior confirms that directly applying
classical hypothesis testing after anomaly detection leads to invalid inference
because the anomaly selection event is ignored. The ablation results further
highlight the necessity of each conditioning component in PADI. Op1, which
removes the sign-pattern conditioning, fails to maintain FPR control, showing
that this conditioning step is essential for handling the $\ell_1$ test
statistic. Similarly, Op2 fails when the anomaly-selection event is ignored,
confirming that accounting for the Deep SVDD selection mechanism is necessary
for valid post-selection inference. Therefore, Naive, Op1, and Op2 are excluded from the TPR comparison. In the TPR experiments, PADI consistently achieves higher TPR than OC in both independent and correlated settings, and the gap becomes more pronounced as \(\Delta\) increases. The higher TPR of PADI is consistent with the reduced conditioning discussed in Section ~\ref{subsec:characterization_Z}. OC restricts inference to the observed affine region, whereas PADI retains all affine regions compatible with the same anomaly-selection event. These results indicate that PADI achieves stronger statistical TPR than OC while maintaining valid FPR control.

\begin{figure*}[!t]
    \centering

    \begin{subfigure}{0.49\linewidth}
        \centering
        \includegraphics[width=.8\linewidth]{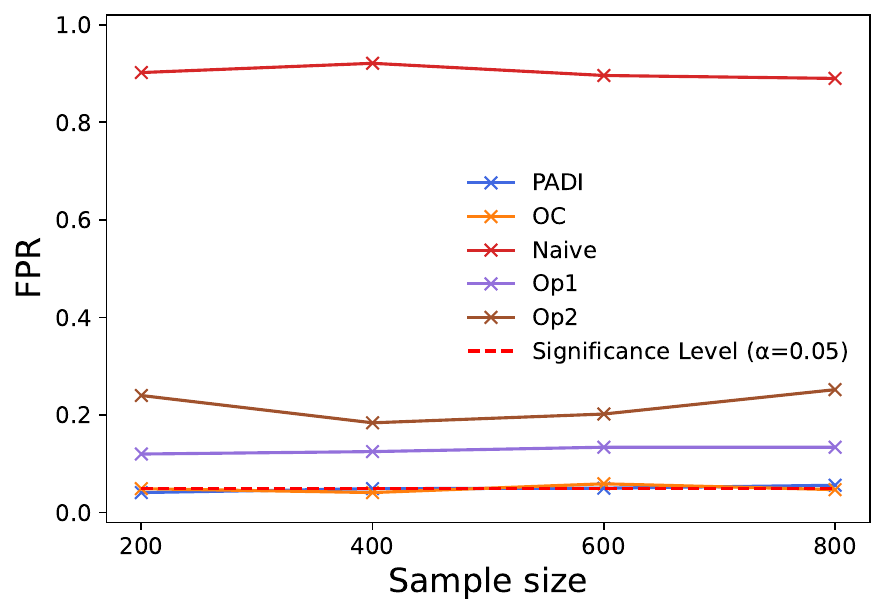}
        \caption{FPR on independent data}
        \label{fig:synthetic-fpr-independent}
    \end{subfigure}
    \hfill
    \begin{subfigure}{0.49\linewidth}
        \centering
        \includegraphics[width=.8\linewidth]{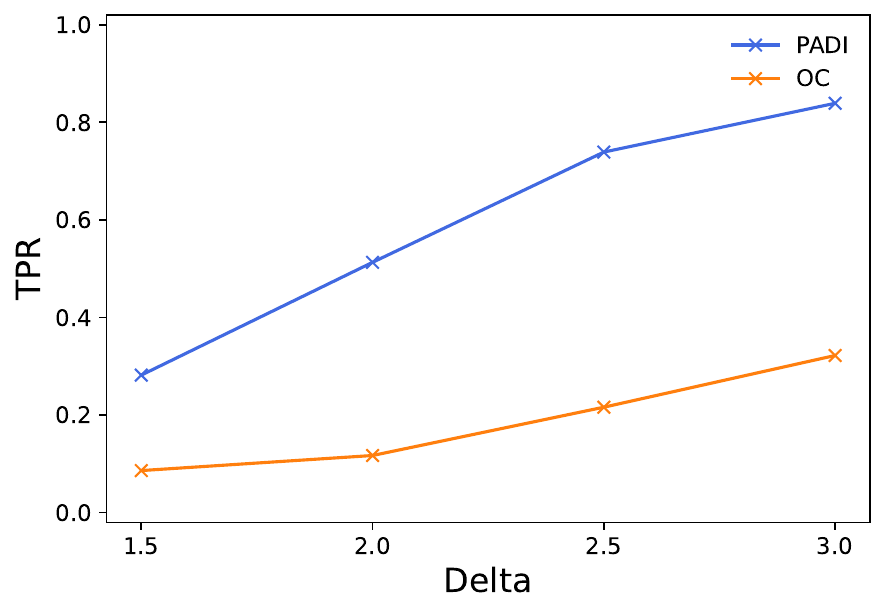}
        \caption{TPR on independent data}
        \label{fig:synthetic-tpr-independent}
    \end{subfigure}

    \vspace{5pt}

    \begin{subfigure}{0.49\linewidth}
        \centering
        \includegraphics[width=.8\linewidth]{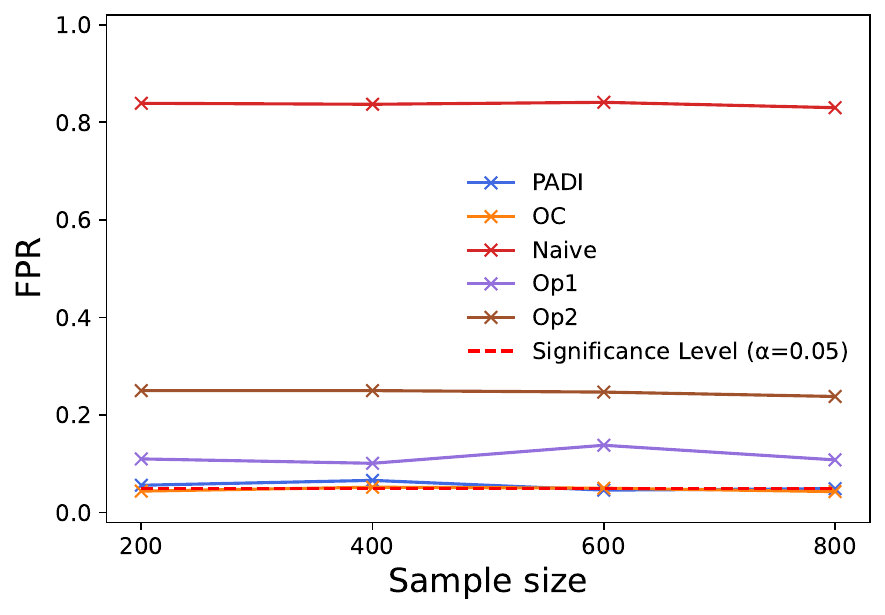}
        \caption{FPR on correlated data}
        \label{fig:synthetic-fpr-correlated}
    \end{subfigure}
    \hfill
    \begin{subfigure}{0.49\linewidth}
        \centering
        \includegraphics[width=.8\linewidth]{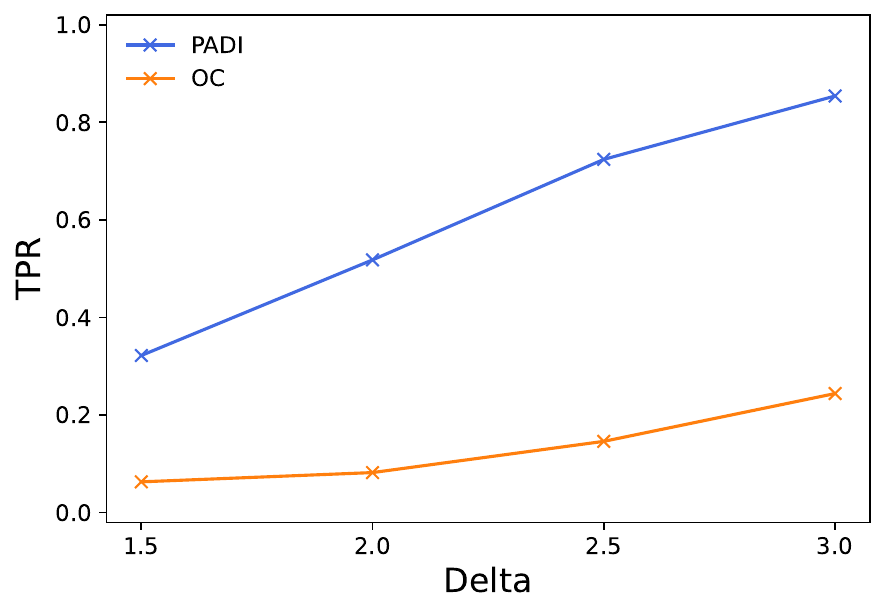}
        \caption{TPR on correlated data}
        \label{fig:synthetic-tpr-correlated}
    \end{subfigure}

    \caption{Results on synthetic data. }
    \label{fig:synthetic-results}
    \vspace{-15pt}
\end{figure*}

\vspace{5pt}
\textbf{Runtime evaluation of CNN-based GPU kernels.}
We further evaluate the efficiency of the proposed Numba-CUDA kernels by
comparing the runtime required to compute a selective $p$-value with a PyTorch-based
implementation. The purpose of this experiment is to assess whether the custom
CUDA kernels are beneficial for the repeated CNN forward propagation and
feasible-interval updates required by PADI. All runtime experiments are conducted
on an NVIDIA Tesla P100 GPU. We generate synthetic grayscale images of size
\(300\times 300\). Let
\(X\in\mathbb{R}^{300\times 300}\) denote an image. For normal images, each
pixel intensity is independently generated as
\[
    X_{h,w}\sim \mathcal{N}\left(\frac{255}{2},1\right),
    \quad h,w=1,\ldots,300.
\]
The generated pixel values are clipped to \([0,255]\) and normalized to \([0,1]\).
Each image is then divided into \(30\times30\) patches with stride \(30\).
Each patch is treated as an individual test instance.
The normal reference set used in the inference step is independently generated
from the same normal image distribution and processed using the same
patch-extraction procedure.

The convolutional encoder is composed of blocks of the form
\[
\mathrm{Conv2D}
\rightarrow
\mathrm{BatchNorm}
\rightarrow
\mathrm{LeakyReLU}
\rightarrow
\mathrm{MaxPool}.
\]
Each convolution uses a \(3\times3\) kernel with padding \(1\), and max pooling
uses a \(2\times2\) window with stride \(2\). 

We first evaluate the impact of encoder depth by varying the number of
convolutional blocks in \(\{4,5,6,7\}\). The four-block architecture uses channel
sizes \(16,32,64,128\), with max pooling applied only in the first block. The
deeper architectures are obtained by appending additional \(128\)-channel blocks
without max pooling. The results are shown in Fig.~\ref{fig:gpu_time_depth}. Across all
tested network depths, the Numba-CUDA implementation requires substantially less
computation time than the PyTorch-based implementation. Moreover, as the number
of convolutional blocks increases, the runtime gap becomes more pronounced.
These results indicate that the proposed custom CUDA kernels effectively
accelerate the CNN-based line-search computation in PADI, especially when the
frozen Deep SVDD encoder becomes deeper.

To further evaluate the scalability with respect to input dimensionality, we
conduct an additional experiment using the same experimental setting described
above. In this experiment, we fix the encoder architecture to the four-block
network and vary the input patch size from \(30\times30\), \(40\times40\),
\(50\times50\), to \(60\times60\), corresponding to input dimensions of
\(900\), \(1600\), \(2500\), and \(3600\), respectively. The results are shown
in Fig.~\ref{fig:gpu_time_dimension}, the runtime increases with
the input dimension for both implementations due to the increased number of
computations. However, the Numba-CUDA implementation consistently achieves lower runtime than the PyTorch-based implementation across all tested input dimensions. These results demonstrate that the proposed custom CUDA kernels remain effective in accelerating the CNN-based line-search computation in PADI as the input dimensionality increases.

\begin{figure}[tbp]
    \centering
    \begin{subfigure}{0.47\linewidth}
        \centering
        \includegraphics[width=\linewidth]{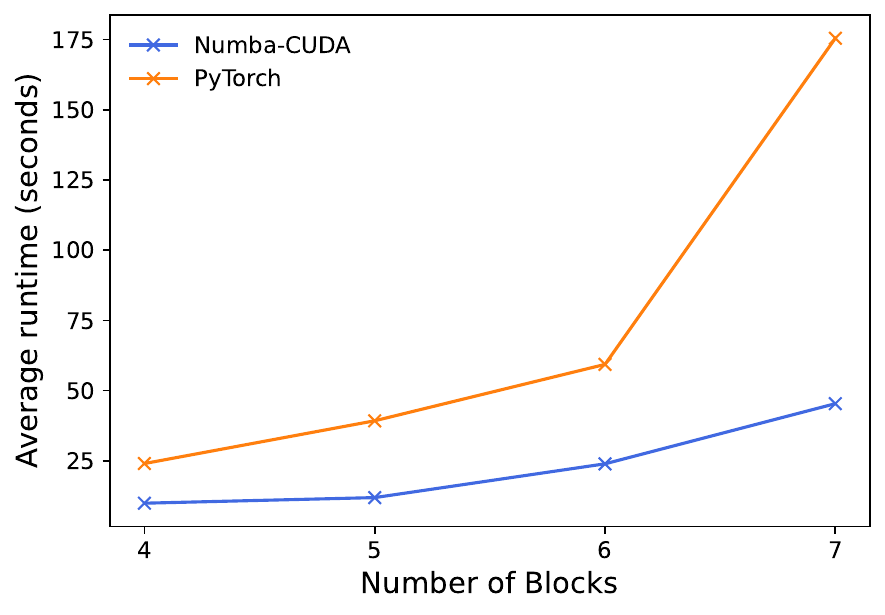}
        \caption{Different numbers of convolutional blocks.}
        \label{fig:gpu_time_depth}
    \end{subfigure}
    \hfill
    \begin{subfigure}{0.47\linewidth}
        \centering
        \includegraphics[width=\linewidth]{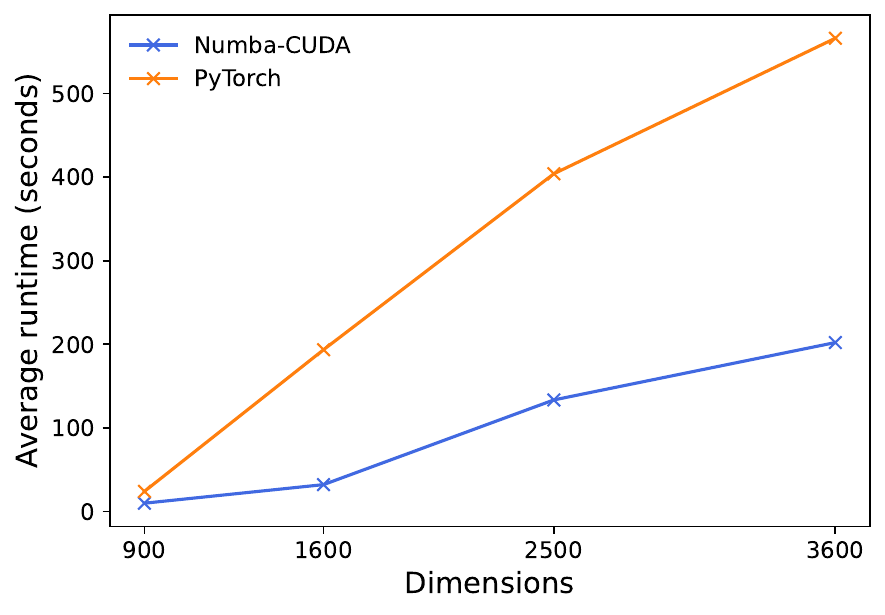}
        \caption{Different input dimensions.}
        \label{fig:gpu_time_dimension}
    \end{subfigure}
    
    \caption{Runtime comparison between the proposed Numba-CUDA implementation and
    the PyTorch-based implementation for CNN-based PADI.}
    \label{fig:gpu_time}
    \vspace{-15pt}
\end{figure}

\vspace{-5pt}
\subsection{Real-World Tabular Data Experiments}
\label{subsec:real_tabular}
\vspace{-5pt}

We evaluate the proposed method on 5 real-world tabular datasets: \textit{Breast Cancer}, \textit{Parkinson Disease}, \textit{Pharmacy Medicine}, \textit{Credit Fraud}, and \textit{Pulsar Stars}. These datasets cover different application domains and have feature dimensions ranging from 8 to 31. The neural network encoder used in these experiments has a seven-layer architecture \([128, 64, 32, 16, 8, 4, 2]\) with Leaky ReLU activations, where the negative slope is set to \(0.01\). In all experiments, we retain only numerical features and standardize each dataset so that every feature has zero mean and unit variance before applying anomaly detection and statistical inference. The normal reference set used in the inference step is randomly sampled from a
separate independent normal dataset that does not overlap with the test samples. The covariance matrix $\Sigma$ is estimated using a separate set of normal samples that is independent of both the test samples and the reference samples used for selective inference.

The empirical FPR and TPR results are shown in Fig.~\ref{fig:tabular-results}. Across all five datasets, PADI  and OC maintain the empirical FPR close to the target significance level, while the Naive approach exhibits severely inflated FPR values. The failure of the Naive approach is observed across datasets from different application domains, indicating that the selection bias caused by anomaly detection is an inherent issue of performing statistical inference after data-driven selection. Compared with OC, PADI achieves higher TPR on every evaluated dataset. This
consistent advantage across datasets with different feature dimensions and
application domains demonstrates that the proposed conditioning strategy can
retain higher statistical power in various tabular anomaly detection scenarios.


\begin{figure}[!t]
    \centering
    \includegraphics[width=.8\linewidth]{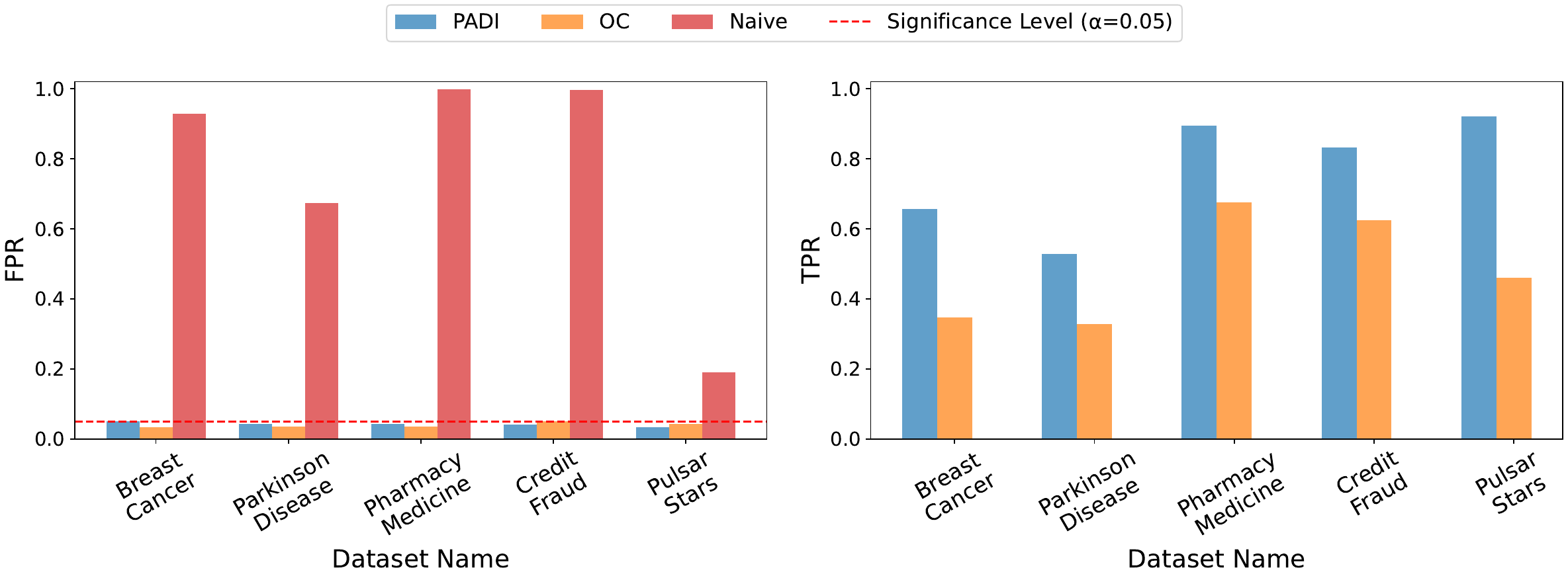}
    \caption{
    Results on real tabular datasets. Left: empirical FPR of PADI, OC, and Naive. Right: empirical TPR of the statistically valid methods. PADI achieves stronger TPR than OC while preserving FPR control.
    }
    \label{fig:tabular-results}
    \vspace{-5pt}
\end{figure}

\vspace{-5pt}
\subsection{Real-World Image Data Experiments}
\label{subsec:real_image}
\vspace{-5pt}


In this experiment, we use the MVTec~AD dataset
\citep{bergmann2019mvtec,bergmann2021mvtec} and evaluate five categories
chosen to span both texture subtypes described in MVTec~AD and to include
an object case: \textit{Carpet} and \textit{Grid} are the two regular-texture
categories, \textit{Tile} and \textit{Wood} are random-texture categories,
and \textit{Zipper} is an object category. This experiment is intended as a
proof-of-concept evaluation of PADI in a patch-level image setting rather
than as a comprehensive evaluation of all 15 MVTec~AD categories. All images are converted to grayscale, resized to \(300\times300\), and
converted to tensors in \([0,1]\). Each resized image is then divided into non-overlapping
\(30\times30\) patches with stride \(30\). The patch-level setting is motivated by the localized nature of many defects
in MVTec AD, which may occupy only a small fraction of an image
\citep{bergmann2021mvtec}. When the entire image is treated as a single test
instance, the contribution of a small defective region to the overall
discrepancy may be weak relative to the much larger normal region. This
motivation is consistent with \citet{huang2026unsupervised}, who show that
analyzing local patches increases the relative visibility and signal-to-noise
ratio of small defects. Accordingly, each $30\times30$ patch is treated as a
candidate test instance.
%
Patch-level anomaly labels are obtained from the pixel-level ground-truth
defect masks provided by MVTec AD. A patch is labeled anomalous if at least \(5\%\) of its pixels
overlap with the defect region; otherwise, it is treated as normal. This mask-based labeling avoids
incorrectly treating all patches from an anomalous image as anomalous, since defects in MVTec AD
are usually localized to a small region.  The convolutional encoder consists of two convolutional blocks of the form
$
\mathrm{Conv2D}
\rightarrow
\mathrm{BatchNorm}
\rightarrow
\mathrm{LeakyReLU}
\rightarrow
\mathrm{MaxPool},
$
with channel sizes \(8\) and \(16\). Each convolution uses a \(3\times3\) kernel
with padding \(1\), and each max-pooling layer uses a \(2\times2\) window with
stride \(2\). The final fully connected layer maps the resulting feature vector
to a \(16\)-dimensional latent representation. The normal reference set used in the inference step is randomly sampled from a
separate independent normal dataset that does not overlap with the test samples. The covariance matrix $\Sigma$ is estimated using a separate set of normal samples that is independent of both the test samples and the reference samples used for selective inference. 

The empirical FPR and TPR results are shown in Fig.~\ref{fig:image-results}. PADI and OC
successfully control the empirical FPR around the target significance level
across all five MVTec AD categories. In contrast, the Naive approach produces
substantially inflated FPR values across all categories, demonstrating that
conventional statistical testing becomes unreliable when applied after
patch-level anomaly selection. Regarding detection power, PADI consistently achieves higher TPR than OC across
all evaluated image categories. These results show that PADI maintains its advantage across different types of
image anomalies evaluated in the experiments.

\begin{figure}[!t]
    \centering
    \includegraphics[width=.8\linewidth]{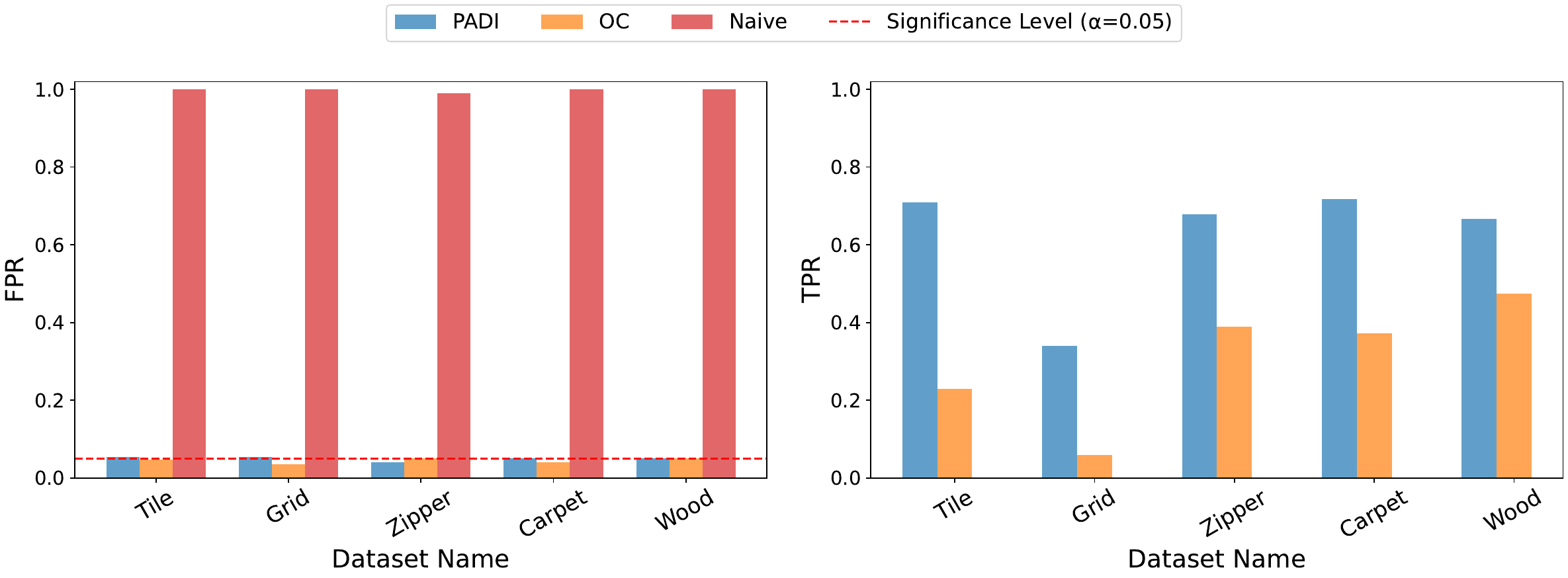}
    \caption{
    Results on real image data from the MVTec AD dataset. Left: empirical FPR of PADI, OC, and Naive across five classes.
    Right: empirical TPR of the statistically valid methods.
    PADI maintains FPR control while achieving higher TPR
    than OC.
    }
    \label{fig:image-results}
    \vspace{-10pt}
\end{figure}

\section{Limitations}
\label{sec:limitations}

PADI's current exact derivation requires the frozen encoder to be piecewise
affine, as stated in Assumption~1. Standard ReLU-based ResNets satisfy this
requirement: convolutional and linear layers, inference-mode Batch
Normalization, and average pooling are affine, whereas ReLU and max pooling
are piecewise affine. Residual connections preserve this property, and
stacking such residual blocks preserves it as well. In contrast, GELU,
Layer Normalization, and standard self-attention are not piecewise affine.
Architectures containing these operations, including standard ViTs, are
therefore not directly covered by the current derivation. Extending PADI's
exact truncation-region construction to such architectures is left for
future work.

Beyond this architectural limitation, PADI's selective \(p\)-values should
not be interpreted as unconditional guarantees of detector reliability.
Their statistical validity relies on the fixed-detector setting in
Section~\ref{subsec:frozen_detector_problem} and the statistical model and
associated assumptions in
Section~\ref{subsec:hypothesis_testing_problem}. If these assumptions are
violated, the \(p\)-values may be miscalibrated and the stated FPR-control
guarantee may no longer hold. Accordingly, PADI should not be deployed in
high-stakes settings without checking these assumptions and conducting
domain-specific validation.

In particular, the current theoretical guarantee relies on the Gaussian
test-reference model, which is used to derive the conditional null distribution
of the selective $p$-value. Therefore, PADI provides exact finite-sample
validity under the stated model assumptions rather than distribution-free
validity. The assumptions regarding independent normal reference samples,
fixed anomaly-selection thresholds, and known or independently estimated
covariance matrices are required to ensure that the selective inference
procedure is correctly calibrated. Extending the theoretical guarantee to more
general data distributions or relaxing these assumptions remains an important
direction for future work.

Another limitation concerns the interpretation of the selective $p$-value, which
depends on the chosen test statistic. In the current formulation, PADI
quantifies the statistical significance of the input-space $\ell_1$
discrepancy between a selected test sample and the mean of independent normal
reference samples, conditional on the Deep SVDD selection event. Therefore, the
resulting $p$-value should not be interpreted as a universal probability of
anomaly or a measure of semantic abnormality. For anomaly detection tasks where
abnormality is defined by high-level semantic differences rather than
input-space deviations, the current test statistic may not capture the desired
notion of abnormality, and an alternative inferential target or test statistic
may be required. Developing selective inference procedures with more general
test statistics for such broader notions of anomaly remains an important
direction for future work.

\vspace{-5pt}
\section{Conclusion}
\vspace{-5pt}

In this paper, we proposed PADI, a statistically rigorous inference framework for Deep SVDD-based AD that associates detected anomalies with valid $p$-values. By formulating anomaly assessment within the SI framework, PADI addresses the double-dipping issue inherent in post-selection inference and provides theoretical guarantees for controlling the FPR at a user-specified significance level. The proposed method operates in a post hoc manner and can be seamlessly applied to trained and frozen Deep SVDD models without requiring retraining, while also extending naturally to deep semi-supervised anomaly detection. Extensive experiments on synthetic and real-world datasets demonstrate that PADI consistently achieves reliable FPR control while maintaining strong detection power. These results establish PADI as a practical and statistically sound framework for enhancing the reliability of 
Deep SVDD-based AD systems.

\subsection*{Acknowledgements}
This research was supported by The VNUHCM-University of Information Technology’s Scientific Research Support Fund.

\FloatBarrier
\bibliography{main}
\bibliographystyle{tmlr}

\newpage

\appendix
\section*{Appendix}

\section{Proofs of the main theorems}
\label{app:proofs}

The proofs below instantiate the Gaussian-conditioning framework of \citet{lee2016exact}. The inherited steps are the conditional truncated-Gaussian validity argument and the one-dimensional affine-line reduction; relative to Lee et al., the application-specific step is the construction of the PADI truncation region $\mathcal{Z}$, detailed in Appendix~\ref{app:truncation_region}.

\subsection{Proof of Theorem~\ref{thm:validity_main}}
\label{app:proof_validity}

Let
\[
\mathcal O(\bm Y)
=
\left(
\cA(\vX^{\mathrm{test}}),
\cS(\bm Y)
\right).
\]
Under \(H_0\), we have
\[
\veta^\top\bm Y
\mid
\left\{
\mathcal O(\bm Y)=\mathcal O(\bm y),
\cQ(\bm Y)=\cQ(\bm y)
\right\}
\sim
{\rm TN}
\left(
0,\,
\veta^\top\tilde{\mSigma}\veta,\,
\mathcal Z
\right),
\]
which is a truncated normal distribution with mean \(0\), variance
\(\veta^\top\tilde{\mSigma}\veta\), and truncation region \(\mathcal Z\). 
Because $\mathcal{Z}$ is a finite union of intervals, applying the probability integral transform to the conditional distribution of $|\veta^\top\bm Y|$, using the argument of Theorem~5.2 and the corresponding union-of-intervals argument in Theorem~5.3 of \citet{lee2016exact}, yields the stated uniform distribution of $p^{\mathrm{selective}}$.

Therefore, under the  null hypothesis,
\[
p^{\mathrm{selective}}
\mid
\left\{
\mathcal O(\bm Y)=\mathcal O(\bm y),
\cQ(\bm Y)=\cQ(\bm y)
\right\}
\sim
\mathrm{Unif}(0,1).
\]
Thus,
\[
\mathbb P_{H_0}
\left(
p^{\mathrm{selective}}\le \alpha
\;\middle|\;
\mathcal O(\bm Y)=\mathcal O(\bm y),
\cQ(\bm Y)=\cQ(\bm y)
\right)
=
\alpha,
\qquad
\forall \alpha\in[0,1].
\]

Next, we have
\[
\begin{aligned}
&
\mathbb P_{H_0}
\left(
p^{\mathrm{selective}}\le \alpha
\;\middle|\;
\mathcal O(\bm Y)=\mathcal O(\bm y)
\right)
\\
&=
\int
\mathbb P_{H_0}
\left(
p^{\mathrm{selective}}\le \alpha
\;\middle|\;
\mathcal O(\bm Y)=\mathcal O(\bm y),
\cQ(\bm Y)=\cQ(\bm y)
\right)
\mathbb P_{H_0}
\left(
\cQ(\bm Y)=\cQ(\bm y)
\;\middle|\;
\mathcal O(\bm Y)=\mathcal O(\bm y)
\right)
\,d\cQ(\bm y)
\\
&=
\int
\alpha
\mathbb P_{H_0}
\left(
\cQ(\bm Y)=\cQ(\bm y)
\;\middle|\;
\mathcal O(\bm Y)=\mathcal O(\bm y)
\right)
\,d\cQ(\bm y)
\\
&=
\alpha
\int
\mathbb P_{H_0}
\left(
\cQ(\bm Y)=\cQ(\bm y)
\;\middle|\;
\mathcal O(\bm Y)=\mathcal O(\bm y)
\right)
\,d\cQ(\bm y)
\\
&=
\alpha.
\end{aligned}
\]

Finally, averaging over all possible realizations of the selection event, we
obtain
\[
\begin{aligned}
\mathbb P_{H_0}
\left(
p^{\mathrm{selective}}\le \alpha
\right)
&=
\sum_{\mathcal O(\bm y)}
\mathbb P_{H_0}
\left(
p^{\mathrm{selective}}\le \alpha
\;\middle|\;
\mathcal O(\bm Y)=\mathcal O(\bm y)
\right)
\mathbb P_{H_0}
\left(
\mathcal O(\bm Y)=\mathcal O(\bm y)
\right)
\\
&=
\sum_{\mathcal O(\bm y)}
\alpha
\mathbb P_{H_0}
\left(
\mathcal O(\bm Y)=\mathcal O(\bm y)
\right)
=
\alpha.
\end{aligned}
\]
This proves Theorem~\ref{thm:validity_main}.

\subsection{Proof of Theorem~\ref{thm:data_line}}
\label{app:proof_data_line}

Conditional on $\cS(\bm Y)=\cS(\bm y)$, the contrast direction $\veta$ is fixed. Under the Gaussian model, Eqs.~(5.2)--(5.3) of \citet{lee2016exact} imply
\[
y=z_{\mathrm{Lee}}+c(\eta^\top y),
\qquad
z_{\mathrm{Lee}}=(I_n-c\eta^\top)y,
\]
where $z_{\mathrm{Lee}}$ is independent of $\eta^\top y$. Under the correspondence $y\leftrightarrow\bm Y$, $\Sigma\leftrightarrow\tilde{\mSigma}$, and $c\leftrightarrow\vb$, their vector $z_{\mathrm{Lee}}$ corresponds to $\cQ(\bm Y)$, whereas our scalar $z=\veta^\top\bm Y$ corresponds to their contrast $\eta^\top y$. The short derivation below is included only to establish the notation used to characterize $\mathcal{Z}$.

Recall that
\[
z=\veta^\top \bm Y,
\qquad
\cQ(\bm Y)
=
\left(\mI_{(m+1)D}-\vb\veta^\top\right)\bm Y
=
\bm Y-\vb z,
\]
where
\[
\vb
=
\frac{\tilde{\mSigma}\veta}{\veta^\top\tilde{\mSigma}\veta}.
\]
Let
\[
\va=\cQ(\bm y),
\qquad
z_{\rm obs}=\veta^\top\bm y.
\]
Since \(\cQ(\bm Y)=\bm Y-\vb z\), the condition
\(\cQ(\bm Y)=\cQ(\bm y)\) implies
\[
\bm Y-\vb z=\va,
\]
or equivalently,
\[
\bm Y=\va+\vb z.
\]
Hence, after conditioning on \(\cQ(\bm Y)=\cQ(\bm y)\), the remaining
randomness is indexed only by the scalar \(z\) along the one-dimensional affine
line
\[
\bm Y(z)=\va+\vb z,
\qquad z\in\mathbb R.
\]

The remaining non-nuisance conditions restrict \(z\) to the set
\[
\mathcal Z
=
\left\{
z\in\mathbb R
\;\middle|\;
\begin{array}{l}
\cA(\vX^{\rm test}(z))=\cA(\bm x^{\rm test}),\\
\cS(\bm Y(z))=\cS(\bm y)
\end{array}
\right\}.
\]
Thus, the conditioning set \(\mathcal D\) can be written as
\[
\mathcal D
=
\left\{
\bm Y(z)=\va+\vb z
\;\middle|\;
z\in\mathcal Z
\right\},
\]
which proves Theorem~\ref{thm:data_line}.

\section{Characterization of the Truncation Region}
\label{app:truncation_region}

In this appendix, we provide the detailed derivations for the truncation
region \(\mathcal Z\) summarized in Section~\ref{subsec:characterization_Z}.
As described in the main text, the truncation region is decomposed as
\[
\mathcal Z = \mathcal Z_{\mathrm{sign}} \cap \mathcal Z_{\mathrm{AD}}.
\]
We first decompose the affine-line representation
\(\bm Y(z)=\va+\vb z\) into its test and reference components, and then
derive the two sub-problems, \(\mathcal Z_{\mathrm{sign}}\) and
\(\mathcal Z_{\mathrm{AD}}\), in detail.

By Theorem~\ref{thm:data_line}, conditioning on
\(\cQ(\bm Y)=\cQ(\bm y)\) restricts the random vector \(\bm Y\) to the
one-dimensional affine line
\[
\bm Y(z)=\va+\vb z,
\qquad z\in\mathbb R,
\]
where \(\va,\vb\in\mathbb R^{(m+1)D}\). Since \(\bm Y\) is stacked in the order
\[
\bm Y=
\operatorname{vec}
\left(
\vX^{\mathrm{test}},
\vX^{1,\mathrm{ref}},
\dots,
\vX^{m,\mathrm{ref}}
\right),
\]
we partition \(\va\) and \(\vb\) conformably as
\[
\va
=
\operatorname{vec}
\left(
\va^{\mathrm{test}},
\va^{1,\mathrm{ref}},
\dots,
\va^{m,\mathrm{ref}}
\right),
\qquad
\vb
=
\operatorname{vec}
\left(
\vb^{\mathrm{test}},
\vb^{1,\mathrm{ref}},
\dots,
\vb^{m,\mathrm{ref}}
\right),
\]
where
\[
\va^{\mathrm{test}},\vb^{\mathrm{test}}\in\mathbb R^D,
\qquad
\va^{j,\mathrm{ref}},\vb^{j,\mathrm{ref}}\in\mathbb R^D,
\quad j=1,\dots,m.
\]
Accordingly, each point on the affine line can be written as
\[
\bm Y(z)
=
\operatorname{vec}
\left(
\vX^{\mathrm{test}}(z),
\vX^{1,\mathrm{ref}}(z),
\dots,
\vX^{m,\mathrm{ref}}(z)
\right),
\]
where
\[
\vX^{\mathrm{test}}(z)
=
\va^{\mathrm{test}}+\vb^{\mathrm{test}}z
\in\mathbb R^D,
\]
and, for each \(j=1,\dots,m\),
\[
\vX^{j,\mathrm{ref}}(z)
=
\va^{j,\mathrm{ref}}+\vb^{j,\mathrm{ref}}z
\in\mathbb R^D.
\]

\subsection{Sign-pattern constraint}
\label{appsubsec:Z_sign}

The sign-pattern constraint keeps fixed the signs used to linearize the
\(\ell_1\)-discrepancy. Define the reference mean along the affine line by
\[
{\bar{\vX}}^{\mathrm{ref}}(z)
=
\frac{1}{m}
\sum_{j=1}^m
\vX^{j, \mathrm{ref}}(z).
\]
Using the block representation above, this can be written as
\[
{\bar{\vX}}^{\mathrm{ref}}(z)
=
{\bar{\va}}^{\mathrm{ref}}
+
{\bar{\vb}}^{\mathrm{ref}}z,
\]
where
\[
{\bar{\va}}^{\mathrm{ref}}
=
\frac{1}{m}
\sum_{j=1}^m
\va^{j, \mathrm{ref}},
\qquad
{\bar{\vb}}^{\mathrm{ref}}
=
\frac{1}{m}
\sum_{j=1}^m
\vb^{j, \mathrm{ref}}.
\]

For each coordinate \(u=1,\dots,D\), we have
\[
x_u^{\mathrm{test}}(z)-\bar{x}_u^{\mathrm{ref}}(z)
=
\left(
a_u^{\mathrm{test}}-\bar a_u^{\mathrm{ref}}
\right)
+
\left(
b_u^{\mathrm{test}}-\bar b_u^{\mathrm{ref}}
\right)z.
\]
Let
\[
\alpha_u
=
a_u^{\mathrm{test}}-\bar a_u^{\mathrm{ref}},
\qquad
\gamma_u
=
b_u^{\mathrm{test}}-\bar b_u^{\mathrm{ref}}.
\]
Then the sign event
\[
\cS(\bm Y(z))=\cS(\bm y)
\]
is equivalent to
\[
\cS_u(\bm y)
\left(
\alpha_u+\gamma_u z
\right)
>0,
\qquad
u=1,\dots,D.
\]
Thus, the sign-pattern feasible set is
\[
\mathcal Z_{\mathrm{sign}}
=
\left\{
z\in\mathbb R
\;\middle|\;
\cS_u(\bm y)
\left(
\alpha_u+\gamma_u z
\right)
>0,
\quad
u=1,\dots,D
\right\}.
\]
Each constraint is a linear inequality in the scalar variable \(z\).

\subsection{Affine-region characterization of the frozen encoder}
\label{appsubsec:Z_region}

We next describe how the affine regions of the frozen encoder are used to
characterize the Deep SVDD selection event. Recall from the decomposition
above that the test sample varies along
\(\vX^{\mathrm{test}}(z)=\va^{\mathrm{test}}+\vb^{\mathrm{test}}z\),
and let \(\mathfrak P_{\mathrm{line}}\) denote the collection of affine
regions intersected by this path (as introduced in Lemma~\ref{lem:z_ad}). Each such
region \(\mathcal P\in\mathfrak P_{\mathrm{line}}\) is a  polytope
defined by \(M_{\mathcal P}\) linear inequalities: there exist a matrix
\(\mG_{\mathcal P}\in\mathbb R^{M_{\mathcal P}\times D}\) and a vector
\(\vh_{\mathcal P}\in\mathbb R^{M_{\mathcal P}}\) such that
\[
\mathcal P
=
\left\{
\vx\in\mathbb R^D:
\mG_{\mathcal P}\vx\le \vh_{\mathcal P}
\right\}.
\]
The set of \(z\)-values for which the test sample lies in the affine region
\(\mathcal P\) is
\[
\mathcal Z_{\mathrm{region}}(\mathcal P)
=
\left\{
z\in\mathbb R
\;\middle|\;
\mG_{\mathcal P}
\left(
\va^{\mathrm{test}}+\vb^{\mathrm{test}}z
\right)
\le
\vh_{\mathcal P}
\right\}.
\]
Each \(\mathcal Z_{\mathrm{region}}(\mathcal P)\) is characterized by finitely
many linear inequalities in \(z\).

\subsection{Deep SVDD selection constraint}
\label{appsubsec:Z_score}

The Deep SVDD detector selects the test sample as anomalous when
$\cA(\vX^{\mathrm{test}}(z))=\cA(\bm x^{\mathrm{test}})$, which by the definition of the anomaly score
in \eqref{eq:score_general} is equivalent to
$g(\vX^{\mathrm{test}}(z))\ge \tau$.

By Assumption~\ref{assump:pwa}, within each affine region
\(\mathcal P\in\mathfrak P_{\mathrm{line}}\), the encoder acts as
\(\hat\phi(\vx)=\mL_{\mathcal P}\vx+\vbeta_{\mathcal P}\).
Substituting the test-sample path
\(\vX^{\mathrm{test}}(z)=\va^{\mathrm{test}}+\vb^{\mathrm{test}}z\)
from the decomposition at the beginning of this appendix, we obtain
the following. For \(z\in\mathcal Z_{\mathrm{region}}(\mathcal P)\), the point
\(\vX^{\mathrm{test}}(z)\) lies in region \(\mathcal P\), so the affine
representation of the encoder on \(\mathcal P\) gives
\[
\hat\phi(\vX^{\mathrm{test}}(z))-\hat{\vc}
=
\mL_{\mathcal P}
\left(
\va^{\mathrm{test}}+\vb^{\mathrm{test}}z
\right)
+
\vbeta_{\mathcal P}
-
\hat{\vc}.
\]
Separating the constant term and the coefficient of \(z\), define
\[
\vu_0(\mathcal P)
=
\mL_{\mathcal P}\va^{\mathrm{test}}
+
\vbeta_{\mathcal P}
-
\hat{\vc},
\qquad
\vu_1(\mathcal P)
=
\mL_{\mathcal P}\vb^{\mathrm{test}}.
\]
Then
\[
\hat\phi(\vX^{\mathrm{test}}(z))-\hat{\vc}
=
\vu_0(\mathcal P)+\vu_1(\mathcal P)z.
\]

Therefore, within affine region \(\mathcal P\), the Deep SVDD score along the
path is
\[
g_{\mathcal P}(z)
=
\left\|
\vu_0(\mathcal P)+\vu_1(\mathcal P)z
\right\|_2^2.
\]
Expanding the squared norm gives a quadratic function of the scalar variable
\(z\):
\[
g_{\mathcal P}(z)
=
\kappa_2(\mathcal P)z^2
+
\kappa_1(\mathcal P)z
+
\kappa_0(\mathcal P),
\]
where
\[
\kappa_2(\mathcal P)
=
\|\vu_1(\mathcal P)\|_2^2,
\qquad
\kappa_1(\mathcal P)
=
2\vu_0(\mathcal P)^\top\vu_1(\mathcal P),
\qquad
\kappa_0(\mathcal P)
=
\|\vu_0(\mathcal P)\|_2^2.
\]

Consequently, within region \(\mathcal P\), the set of values of \(z\) for which
the test sample is selected as anomalous by Deep SVDD is
\[
\mathcal Z_{\mathrm{score}}(\mathcal P)
=
\left\{
z\in\mathbb R
\;\middle|\;
\kappa_2(\mathcal P)z^2+
\kappa_1(\mathcal P)z+
\kappa_0(\mathcal P)
\ge \tau
\right\}.
\]
Thus, on each affine region \(\mathcal P\), the Deep SVDD selection constraint
reduces to a quadratic inequality in the one-dimensional variable \(z\).

\section{Details of GPU-Based Parallelization of PADI}
\label{app:gpu_padi_details}

This appendix provides implementation details for the GPU-based parallelization
of PADI described in Section~\ref{sec:gpu_padi}. The computational bottleneck of
PADI comes from the line-search procedure used to identify the selective
truncation region. During this procedure, the frozen Deep SVDD encoder is
repeatedly evaluated along the scalar parameter \(z\) of the selective-inference
line. At each evaluation, PADI propagates the ordinary forward values and their
affine coefficients with respect to \(z\), while updating the feasible interval
whenever a layer induces a data-dependent selection constraint.

For fully connected encoders, PADI follows the GPU-based selective-inference
implementation strategy of STAND-DA~\citep{kiet2026statistical}, which provides
CUDA kernels for matrix transformations and ReLU activation-pattern
conditioning. We therefore do not repeat the fully connected implementation
details here. The main GPU extension in PADI is the support for convolutional
Deep SVDD encoders, which require additional kernels for convolution, batch
normalization, LeakyReLU, max pooling, and the final fully connected layer.

Let \(L\) denote the total number of layers in the frozen encoder. Here, each
operation is counted as one layer; for example, \texttt{Conv2D},
\texttt{BatchNorm}, \texttt{LeakyReLU}, \texttt{MaxPool}, and
\texttt{FullyConnected} are all treated as separate layers. We use
\(\ell=1,\ldots,L\) to index these layers. At layer \(\ell\), let
\(X^{(\ell)}\) denote the ordinary forward output,
and let \(A^{(\ell)}\) and \(B^{(\ell)}\) denote the affine coefficients of the
same layer output with respect to \(z\). Thus, along the selective-inference
line, the layer output is represented by
\[
    X^{(\ell)}(z)=A^{(\ell)}+B^{(\ell)}z,
\]

\paragraph{Conv2D kernel.}
For a convolutional layer \(\ell\), the \texttt{Conv2D} kernel assigns one CUDA
thread to one output feature-map element. Each thread computes the convolution
over the corresponding input channels and spatial kernel window using the fixed
filter \(W^{(\ell)}\). Since convolution is linear in its input, the ordinary
forward values and affine coefficients are propagated as
\[
    X^{(\ell)}
    =
    \operatorname{Conv}\!\left(X^{(\ell-1)},W^{(\ell)}\right),
\]
\[
    A^{(\ell)}
    =
    \operatorname{Conv}\!\left(A^{(\ell-1)},W^{(\ell)}\right),
\]
\[
    B^{(\ell)}
    =
    \operatorname{Conv}\!\left(B^{(\ell-1)},W^{(\ell)}\right).
\]
In our implementation, convolutional layers are bias-free. If a convolutional
bias is used, it should be added to \(X^{(\ell)}\) and \(A^{(\ell)}\), but not to
\(B^{(\ell)}\), since the bias is independent of \(z\). The convolutional kernel
does not introduce any selection constraint; it only propagates \(X\), \(A\), and
\(B\).

\paragraph{BatchNorm kernel.}
For a batch normalization layer \(\ell\), the layer is evaluated in inference
mode. Hence, the running statistics and learned affine parameters are fixed. For
channel \(c\), define
\[
    s_c^{(\ell)}
    =
    \frac{\gamma_c^{(\ell)}}{
    \sqrt{(\sigma_c^{(\ell)})^2+\epsilon}},
    \qquad
    t_c^{(\ell)}
    =
    \beta_c^{(\ell)}-s_c^{(\ell)}\mu_c^{(\ell)},
\]
where \(\mu_c^{(\ell)}\) and \((\sigma_c^{(\ell)})^2\) are the frozen running
mean and variance, and \(\gamma_c^{(\ell)}\) and \(\beta_c^{(\ell)}\) are the
learned scale and shift parameters. For each tensor element \((i,j,c)\), the
CUDA kernel applies the channel-wise affine transformation
\[
    X_{ij, c}^{(\ell)}
    =
    s_c^{(\ell)}X_{ij, c}^{(\ell-1)}+t_c^{(\ell)},
\]
\[
    A_{ij, c}^{(\ell)}
    =
    s_c^{(\ell)}A_{ij, c}^{(\ell-1)}+t_c^{(\ell)},
\]
\[
    B_{ij, c}^{(\ell)}
    =
    s_c^{(\ell)}B_{ij, c}^{(\ell-1)}.
\]
The shift term affects the ordinary forward value \(X\) and the intercept
coefficient \(A\), but not the slope coefficient \(B\). Since batch normalization
in inference mode is affine, it does not add any new constraint on \(z\). The
kernel is parallelized element-wise, with one CUDA thread processing one
feature-map element.

\paragraph{SILeakyReLU kernel.}
For a LeakyReLU layer \(\ell\), each feature-map element is processed
independently by one CUDA thread. The role of this kernel is twofold: it preserves
the observed LeakyReLU branch along the selective-inference line, and it
propagates the ordinary forward value and affine coefficients through the fixed
branch.

For each feature-map element, define the branch sign
\[
    s_{ij}^{(\ell)}
    =
    \begin{cases}
    1, & X_{ij}^{(\ell-1)}\ge 0,\\
    -1, & X_{ij}^{(\ell-1)}<0.
    \end{cases}
\]
Preserving the LeakyReLU branch is equivalent to imposing the linear inequality
\[
    s_{ij}^{(\ell)}
    \left(
        A_{ij}^{(\ell-1)}
        +
        B_{ij}^{(\ell-1)}z
    \right)
    \ge 0 .
\]
Each
feature-map element therefore contributes one local linear constraint on \(z\).
In the CUDA implementation, the thread assigned to element \((i,j)\) converts
this inequality into a local lower-bound or upper-bound candidate for the
feasible interval. The local candidates generated by all threads are merged on
the GPU to update the feasible interval.

After the branch constraint is computed, the kernel propagates \(X\), \(A\), and
\(B\) through the selected LeakyReLU branch. Let \(\rho\) denote the negative
slope of LeakyReLU, and define the branch slope
\[
    k_{ij}^{(\ell)}
    =
    \begin{cases}
    1, & X_{ij}^{(\ell-1)}\ge 0,\\
    \rho, & X_{ij}^{(\ell-1)}<0.
    \end{cases}
\]
Then the propagated quantities are
\[
    X_{ij}^{(\ell)}
    =
    k_{ij}^{(\ell)}X_{ij}^{(\ell-1)},
\]
\[
    A_{ij}^{(\ell)}
    =
    k_{ij}^{(\ell)}A_{ij}^{(\ell-1)},
\]
\[
    B_{ij}^{(\ell)}
    =
    k_{ij}^{(\ell)}B_{ij}^{(\ell-1)}.
\]
The \texttt{SILeakyReLU} kernel extends the ReLU-based selective activation
kernel of STAND-DA~\citep{kiet2026statistical} to LeakyReLU: inactive elements
are not set to zero, but are multiplied by the negative slope \(\rho\).

\paragraph{SIMaxPool kernel.}
For a max-pooling layer \(\ell\), each output element copies the maximum value
from a pooling window. Let \(o=(i,j)\) denote an output position and let
\(\mathcal{W}_o\) be the set of input locations in the pooling window associated
with \(o\). The maximum index is
\[
    q^\star
    =
    \arg\max_{q\in\mathcal{W}_o}
    X_q^{(\ell-1)}.
\]
To preserve the same max-pooling selection along the line, the selected location
must remain no smaller than every other location in the same window:
\[
    A_{q^\star}^{(\ell-1)}
    +
    B_{q^\star}^{(\ell-1)}z
    \ge
    A_q^{(\ell-1)}
    +
    B_q^{(\ell-1)}z,
    \qquad
    \forall q\in\mathcal{W}_o\setminus\{q^\star\}.
\]
Equivalently,
\[
    \left(
    B_{q^\star}^{(\ell-1)}-B_q^{(\ell-1)}
    \right)z
    \ge
    A_q^{(\ell-1)}-A_{q^\star}^{(\ell-1)},
    \qquad
    \forall q\in\mathcal{W}_o\setminus\{q^\star\}.
\]
Thus, one max-pooling output element can generate multiple local linear
constraints on \(z\), one for each comparison inside the pooling window.

The \texttt{SIMaxPool} kernel assigns one CUDA thread to one output pooling
position. The thread identifies \(q^\star\), compares the selected location
with the remaining locations in the same pooling window, and converts the
resulting inequalities into local lower-bound or upper-bound candidates. These
local candidates are merged on the GPU to update the feasible interval.

After the selected maximum index is fixed, the output and affine coefficients are
propagated by copying the selected input location:
\[
    X_o^{(\ell)}
    =
    X_{q^\star}^{(\ell-1)},
\]
\[
    A_o^{(\ell)}
    =
    A_{q^\star}^{(\ell-1)},
\]
\[
    B_o^{(\ell)}
    =
    B_{q^\star}^{(\ell-1)}.
\]

\paragraph{Final fully connected layer.}
After the convolutional blocks, the feature map is flattened and passed through
the final fully connected layer. This layer is handled using the matrix
multiplication kernel inherited from the STAND-DA implementation
\citep{kiet2026statistical}. For the final fully connected layer \(\ell\) with
fixed weight matrix \(W^{(\ell)}\), the propagation is
\[
    X^{(\ell)}
    =
    X^{(\ell-1)}W^{(\ell)},
\]
\[
    A^{(\ell)}
    =
    A^{(\ell-1)}W^{(\ell)},
\]
\[
    B^{(\ell)}
    =
    B^{(\ell-1)}W^{(\ell)}.
\]
In our implementation, the final fully connected layer is bias-free. This layer
does not introduce any new selection constraint.

\section{Experiments for the extension to Deep Semi-Supervised Anomaly Detection}
\label{app:dsad-experiments}

We provide here additional experiments for the extension to Deep Semi-Supervised Anomaly Detection (Deep SAD). We compare the following methods:

- \textbf{PADI (extension to Deep SAD)}: the proposed SI method for Deep Semi-Supervised Anomaly Detection.

- \textbf{OC (extension to Deep SAD)}: 
an over-conditioning baseline that additionally conditions on the observed affine region of the frozen Deep SAD encoder, with the truncation set constructed only from that region, analogous to $\mathcal{Z}_{\mathrm{OC}}$ described in Section~\ref{subsec:characterization_Z}.

- \textbf{Naive}: traditional statistical inference.

- \textbf{Op1 (extension to Deep SAD)}: an ablation study that excludes the sign-pattern constraint.

- \textbf{Op2 (extension to Deep SAD)}: another ablation study that excludes the anomaly detection event for Deep SAD.


\subsection{Synthetic Data Experiments}

We conduct synthetic data experiments for the extension to Deep SAD under the same setting as in Section~\ref{subsec:synthetic}. The results are shown in Fig.~\ref{fig:dsad-synthetic-results}. 
Similar to the Deep SVDD experiments, PADI and OC maintain valid FPR control in both
independent and correlated covariance settings. In contrast, the Naive approach
produces inflated FPR values, showing that directly applying statistical
testing after Deep SAD-based anomaly selection leads to invalid inference due
to the selection bias. The ablation results further illustrate the importance of the two conditioning
components in PADI. Op1, which removes the sign-pattern conditioning, fails to
control the FPR because the $\ell_1$ test statistic can no longer be represented
as the linear contrast assumed in the selective inference procedure. This
prevents the construction of the correct conditional distribution for the
selective $p$-value. Op2, which ignores the Deep SAD anomaly-selection event,
also fails to maintain FPR control because the statistical test does not account
for the data-dependent mechanism used to select anomalies. Although Op2 retains
the sign-pattern conditioning, ignoring the detector selection event leaves the
selection bias uncorrected. The similar behavior of these ablation variants to
the Deep SVDD experiments confirms that both conditioning components remain
necessary when extending PADI to the Deep SAD setting. For the TPR evaluation, PADI consistently achieves higher detection power than
OC as the signal difference $\Delta$ increases. The performance gap becomes
larger for stronger anomaly signals, indicating that PADI preserves more
statistical power while accounting for the necessary selection events. These
results confirm that the proposed inference procedure can be naturally extended
from Deep SVDD to Deep SAD without sacrificing statistical validity or
detection power.

As shown in Figs. \ref{fig:synthetic-results} and \ref{fig:dsad-synthetic-results}, Op2 consistently exhibits greater FPR inflation than Op1 under both independent and correlated covariance settings. This difference can be attributed to the extent to which data-dependent selection is accounted for in each ablation. Op1 retains the anomaly-selection adjustment, whereas Op2 discards it entirely. Although Op1 remains invalid because it does not fully account for the conditional treatment of the $\ell_1$ statistic, retaining the anomaly-selection event can partially mitigate selection bias. In contrast, Op2 completely ignores the selection bias induced by the anomaly detector, leading to a greater distortion of the null distribution and, consequently, larger FPR inflation.

\begin{figure*}[tbp]
    \centering

    \begin{subfigure}[tbp]{0.49\linewidth}
        \centering
        \includegraphics[width=\linewidth]{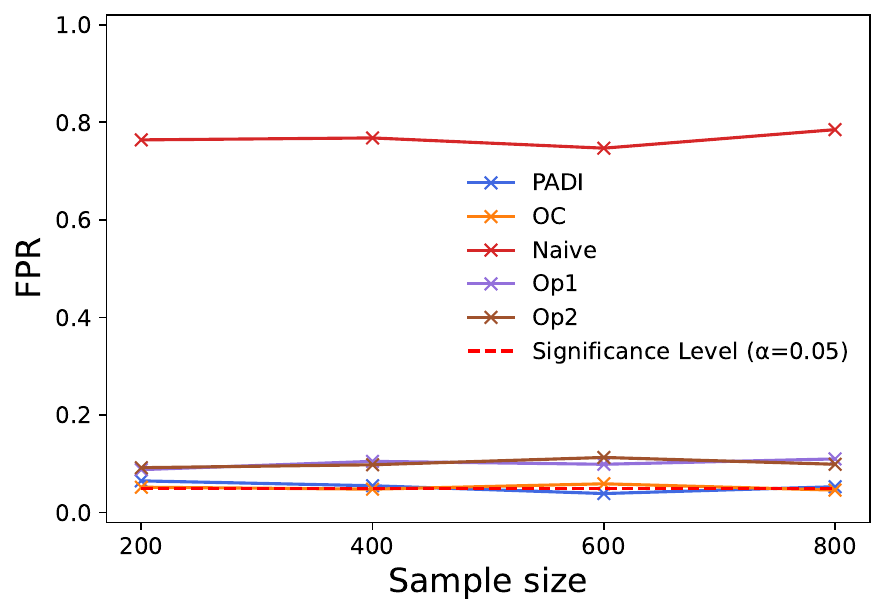}
        \caption{FPR on independent data}
        \label{fig:synthetic-fpr-independent}
    \end{subfigure}
    \hfill
    \begin{subfigure}[tbp]{0.49\linewidth}
        \centering
        \includegraphics[width=\linewidth]{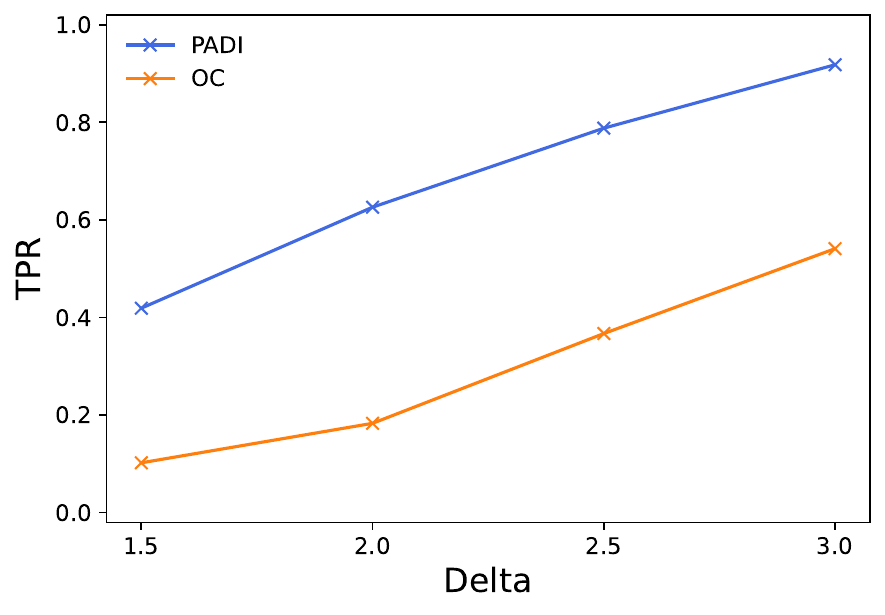}
        \caption{TPR on independent data}
        \label{fig:synthetic-tpr-independent}
    \end{subfigure}

    \vspace{0.8cm}

    \begin{subfigure}[tbp]{0.49\linewidth}
        \centering
        \includegraphics[width=\linewidth]{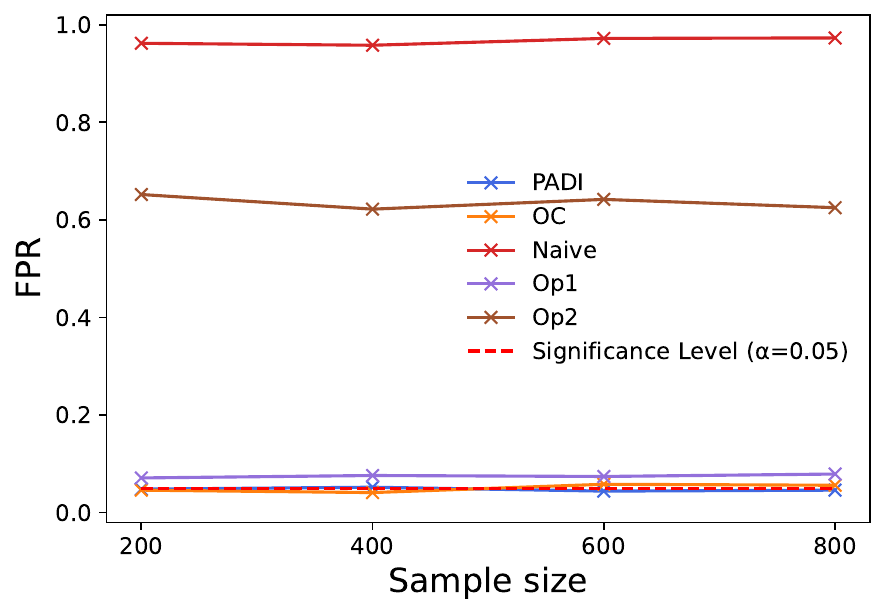}
        \caption{FPR on correlated data}
        \label{fig:synthetic-fpr-correlated}
    \end{subfigure}
    \hfill
    \begin{subfigure}[tbp]{0.49\linewidth}
        \centering
        \includegraphics[width=\linewidth]{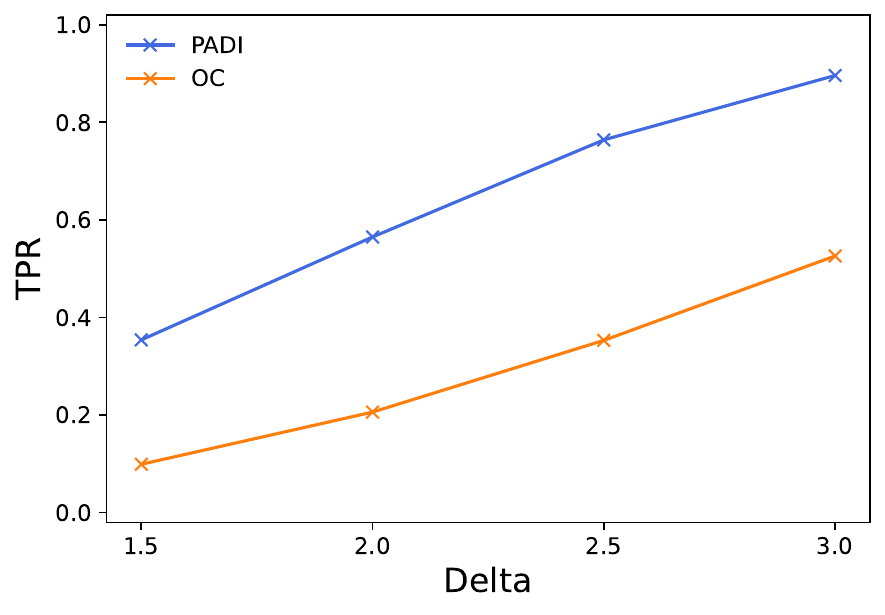}
        \caption{TPR on correlated data}
        \label{fig:synthetic-tpr-correlated}
    \end{subfigure}

    \caption{Results on synthetic data for the extension to Deep SAD. }
    \label{fig:dsad-synthetic-results}
\end{figure*}

\subsection{Real Tabular Data}

We evaluate the extension to Deep SAD on tabular datasets under the same setting as in Section~\ref{subsec:real_tabular}.
The results on real-world tabular datasets are shown in Fig.~\ref{fig:dsad-tabular-results}.
PADI and OC consistently control the empirical FPR around the desired significance
level across all evaluated datasets, whereas the Naive approach suffers from
large FPR inflation. This observation indicates that the statistical validity
provided by PADI is preserved even when the underlying anomaly detector is
trained in a semi-supervised manner.
In terms of TPR, PADI outperforms OC on all datasets. The consistent performance advantage of PADI over OC shows that the extension
to Deep SAD retains the main benefit of PADI: avoiding unnecessary conditioning
while still accounting for the data-dependent anomaly selection event.
Therefore, the proposed framework provides a reliable post-selection inference
procedure for both Deep SVDD and Deep SAD
models.

\begin{figure}[tbp]
    \centering
    \includegraphics[width=\linewidth]{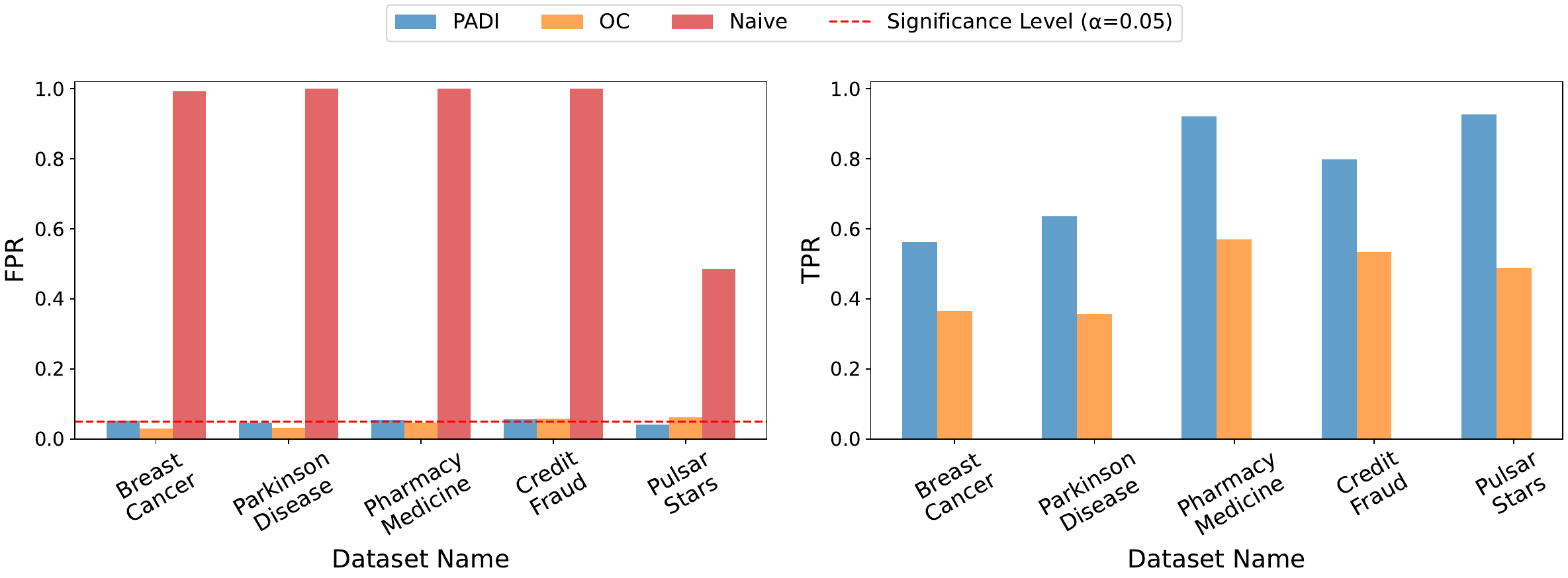}
    \caption{
    Results on real tabular datasets for the extension to Deep SAD. 
    Left: empirical FPR of PADI, OC, and Naive. 
    Right: empirical TPR of the statistically valid methods. 
    PADI consistently achieves stronger TPR than OC while preserving FPR control.
    }
    \label{fig:dsad-tabular-results}
\end{figure}

\subsection{Real Image Data}

In the real-image data experiment for the extension to Deep SAD, we used the same setting as in Section~\ref{subsec:real_image}. The results on MVTec AD are shown in Fig.~\ref{fig:dsad-image-results}. Across all
five image categories, PADI and OC maintain the empirical FPR around the target
significance level, while the Naive approach produces inflated FPR values.
This demonstrates that the selective inference framework remains valid when the
encoder is obtained from Deep SAD training. Furthermore, PADI consistently
achieves higher TPR than OC across different defect categories. The advantage of
PADI is observed across different types of image anomalies considered in the
experiments, including regular-texture, random-texture, and object categories.

\begin{figure}[tbp]
    \centering
    \includegraphics[width=\linewidth]{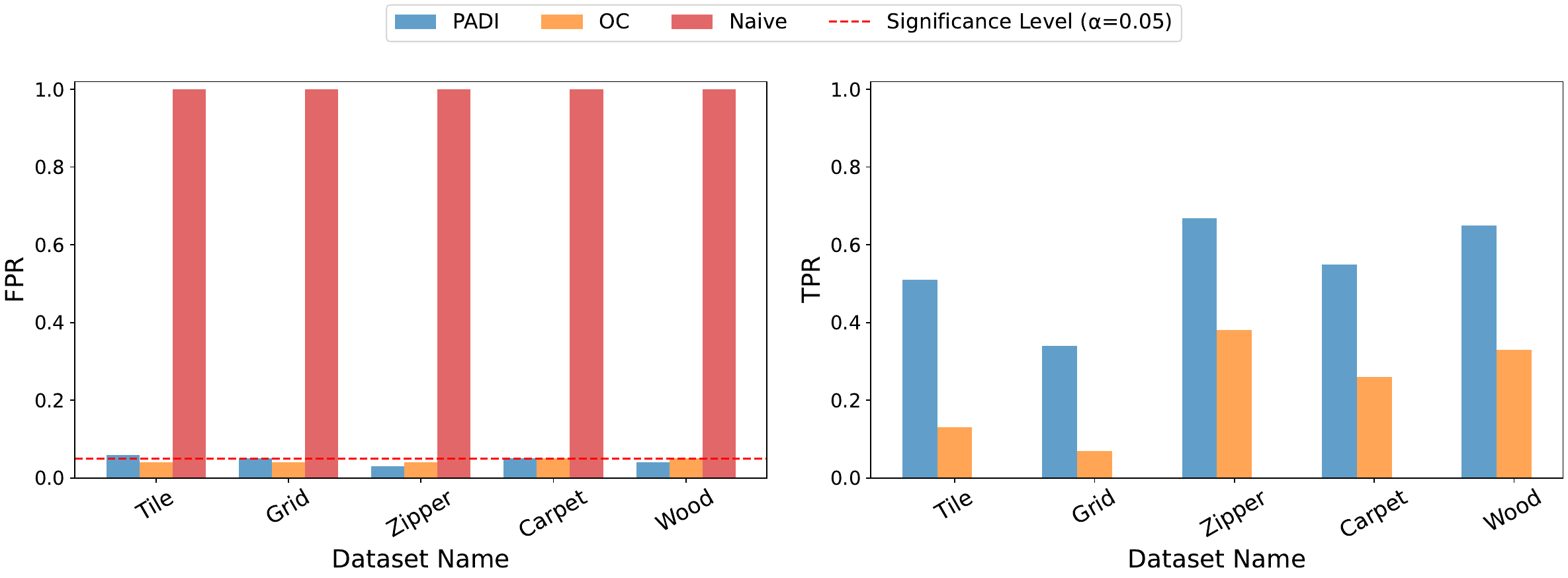}
    \caption{
    Results on real image data from the MVTec AD dataset for the extension to Deep SAD. 
    Left: empirical FPR of PADI, OC, and Naive across five classes. 
    Right: empirical TPR of the statistically valid methods. 
    PADI empirically maintains FPR close to the significance level while achieving higher TPR than OC.
    }
    \label{fig:dsad-image-results}
\end{figure}

\section{Additional Experimental Analysis}
\subsection{Robustness to Non-Gaussian Data Distributions}
\label{subsec:non_gaussian_robustness}
The theoretical guarantee of PADI is derived under the Gaussian test-reference
model, which enables the characterization of the conditional null distribution
after conditioning on the anomaly-selection event, the sign pattern, and the
nuisance statistic. To further investigate the empirical robustness of PADI
beyond the Gaussian assumption, we conduct additional synthetic experiments
using several non-Gaussian distributions.

Specifically, we consider three different non-Gaussian settings: (i) the
skew-normal distribution with a shape parameter of $0.3$, (ii) the Student's
$t$-distribution with degrees of freedom $\mathrm{df}=7$, and (iii) the Laplace
distribution with scale parameter $b=0.8$. For each distribution, the remaining
experimental settings are kept identical to the synthetic FPR experiments in
Section~\ref{subsec:synthetic}. We vary the sample size as
$n\in\{200,400,600,800\}$ and evaluate the empirical FPR at significance level
$\alpha=0.05$.

The results are shown in
Fig.~\ref{fig:non_gaussian_fpr}. Across all three non-Gaussian distributions,
PADI maintains the empirical FPR around the target significance level for all
tested sample sizes. These results indicate that, although the current
theoretical guarantee is established under Gaussian assumptions, the proposed
method can remain empirically robust under several types of non-Gaussian data
distributions considered in our experiments.

\begin{figure}[t]
    \centering
    \begin{subfigure}{0.325\linewidth}
        \centering
        \includegraphics[width=\linewidth]{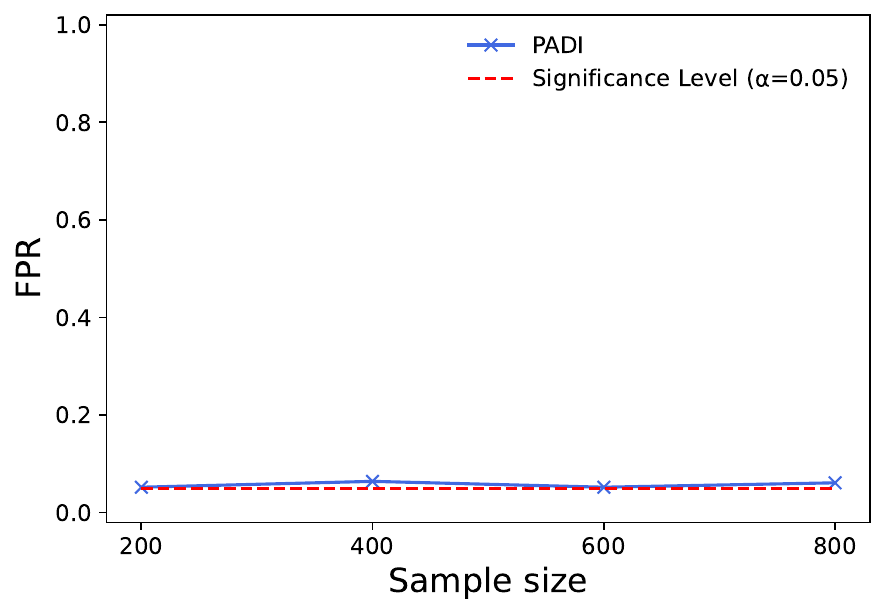}
        \caption{Skew-normal}
    \end{subfigure}
    \hfill
    \begin{subfigure}{0.325\linewidth}
        \centering
        \includegraphics[width=\linewidth]{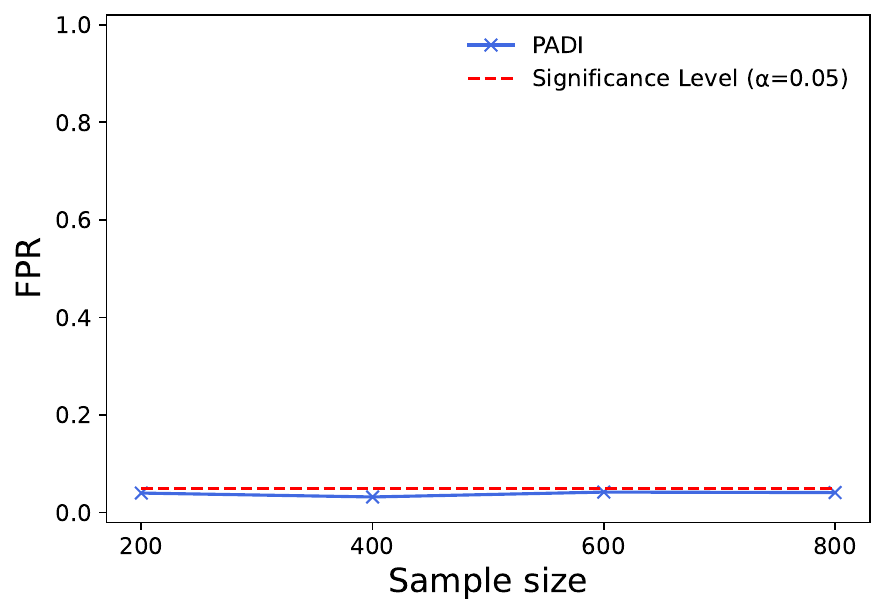}
        \caption{Student's $t$}
    \end{subfigure}
    \hfill
    \begin{subfigure}{0.325\linewidth}
        \centering
        \includegraphics[width=\linewidth]{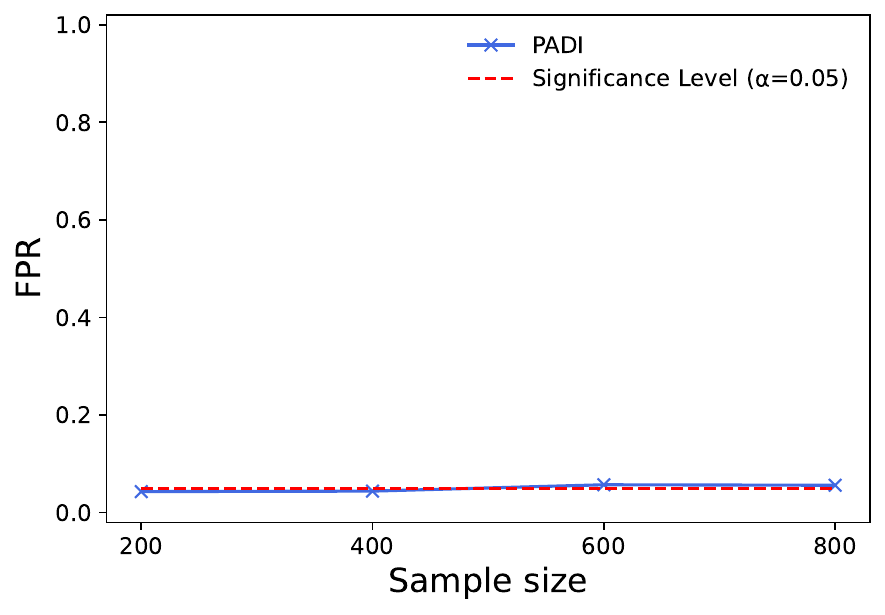}
        \caption{Laplace}
    \end{subfigure}

    \caption{
    Empirical FPR of PADI under non-Gaussian data distributions.
    }
    \label{fig:non_gaussian_fpr}
\end{figure}

\subsection{Effect of the Reference Set Size}
\label{app:reference_size}

We further investigate how the number of independent normal reference samples affects the performance of PADI. 
In particular, we vary the size of the reference set while following the same data generation procedure as the independent setting described in \ref{subsec:synthetic}. We consider reference set sizes of $5$, $10$, $15$, and $20$ samples. 

\begin{figure}[t]
    \centering
    \begin{subfigure}{0.49\linewidth}
        \centering
        \includegraphics[width=\linewidth]{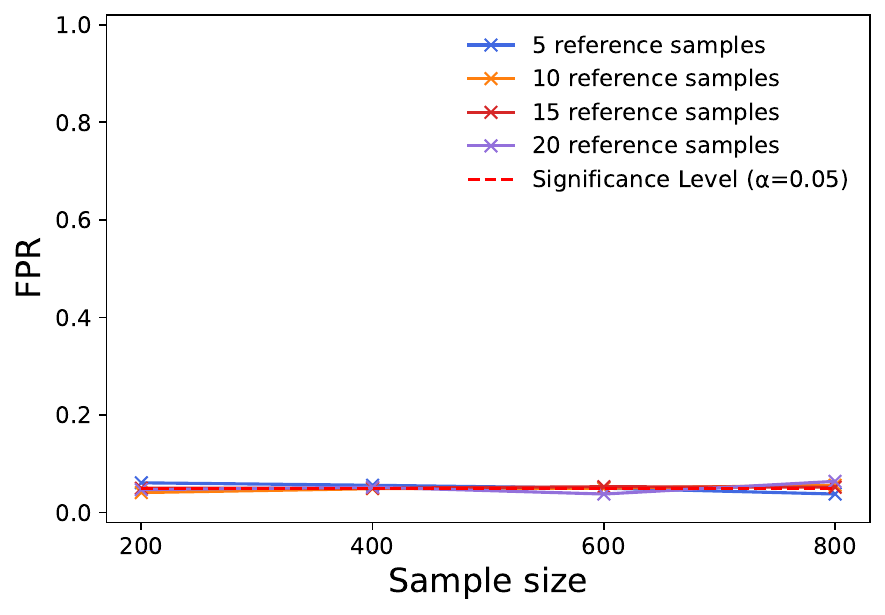}
        \caption{FPR.}
        \label{fig:fpr_ref_size}
    \end{subfigure}
    \hfill
    \begin{subfigure}{0.49\linewidth}
        \centering
        \includegraphics[width=\linewidth]{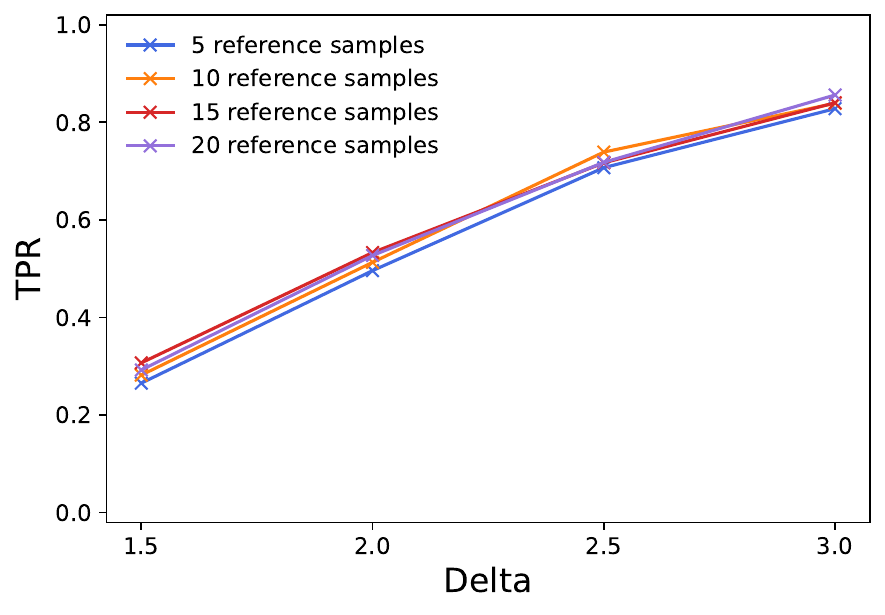}
        \caption{TPR.}
        \label{fig:tpr_ref_size}
    \end{subfigure}
    \caption{Effect of the reference set size on the empirical FPR and TPR.}
    \label{fig:reference_size_analysis}
\end{figure}

Figure~\ref{fig:fpr_ref_size} reports the empirical FPR for different numbers of reference samples.
Across different reference set sizes and test sample sizes, the empirical FPR remains consistently close to the target significance level $\alpha=0.05$. 
This confirms that the proposed selective inference procedure maintains FPR control and that the validity of PADI is not sensitive to the number of reference samples.

We also evaluate the effect of the reference set size on the TPR, as shown in Figure~\ref{fig:tpr_ref_size}. 
Although the reference set size affects the estimation of the normal reference distribution and may influence the statistical power of the test, the TPR values remain highly consistent across different reference set sizes. 
This indicates that, once a reasonable number of independent normal reference samples is available, increasing the reference set size provides limited additional improvement in detection performance.

\subsection{Effect of Data Dimension on FPR Control}
\label{app:dimension_analysis}

We further investigate whether the validity of PADI is affected by the dimensionality of the data. 
Following the same data generation procedure as the independent setting described in \ref{subsec:synthetic}, we vary the data dimension while keeping the other experimental configurations unchanged. 
Specifically, we evaluate PADI with dimensions $d \in \{20,40,60,80\}$.

\begin{figure}[t]
    \centering
    \includegraphics[width=0.6\linewidth]{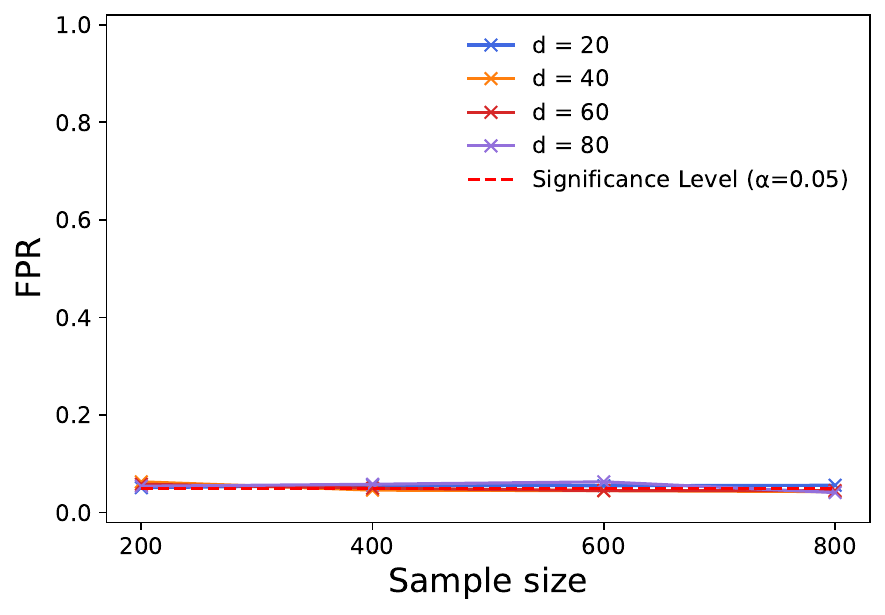}
    \caption{
    Effect of the data dimension on the empirical FPR. 
    }
    \label{fig:fpr_dimension}
\end{figure}

Figure~\ref{fig:fpr_dimension} reports the empirical false positive rate (FPR) under different data dimensions. 
Across all considered dimensions and sample sizes, the empirical FPR remains consistently close to the target significance level $\alpha=0.05$. 
This result indicates that the selective inference procedure of PADI remains valid when varying the data dimension.

\subsection{Additional Evaluation for Synthetic Experiments}
\label{subsec:additional_tpr_synthetic}

In the main synthetic experiments, we report TPR only for methods that
successfully control the FPR at the target significance level, since statistical
power comparison is meaningful only among methods that provide valid inference.
To further analyze the behavior of all competing methods, we additionally
report the TPR results of Naive, Op1, and Op2, despite their invalid FPR control.

\begin{figure*}[!t]
    \centering

    \begin{subfigure}{0.49\linewidth}
        \centering
        \includegraphics[width=.9\linewidth]{synthetic-fpr-svdd-independ_plot.pdf}
        \caption{FPR on independent data}
        \label{fig:add-synthetic-fpr-independent}
    \end{subfigure}
    \hfill
    \begin{subfigure}{0.49\linewidth}
        \centering
        \includegraphics[width=.9\linewidth]{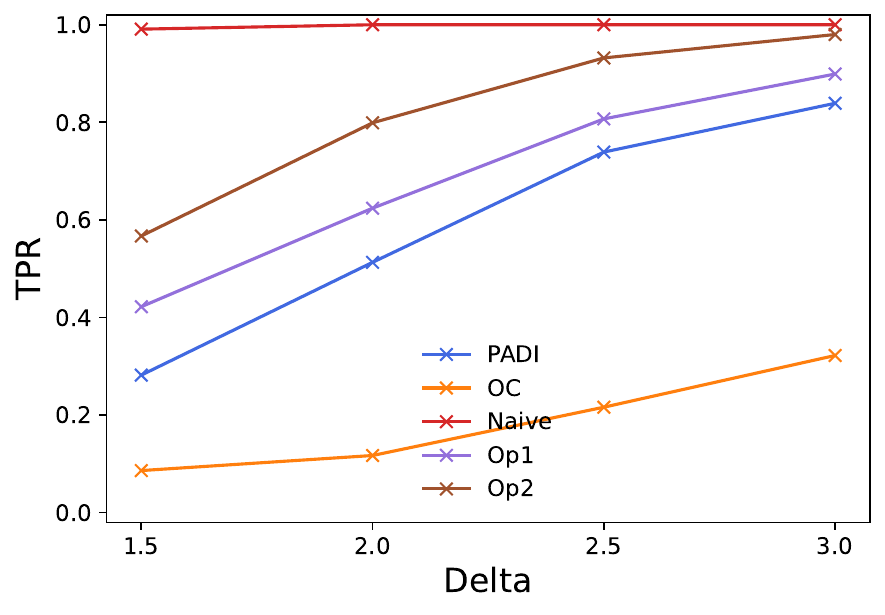}
        \caption{TPR on independent data}
        \label{fig:add-synthetic-tpr-independent}
    \end{subfigure}

    \vspace{5pt}

    \begin{subfigure}{0.49\linewidth}
        \centering
        \includegraphics[width=.9\linewidth]{synthetic-fpr-svdd-corr_plot.pdf}
        \caption{FPR on correlated data}
        \label{fig:add-synthetic-fpr-correlated}
    \end{subfigure}
    \hfill
    \begin{subfigure}{0.49\linewidth}
        \centering
        \includegraphics[width=.9\linewidth]{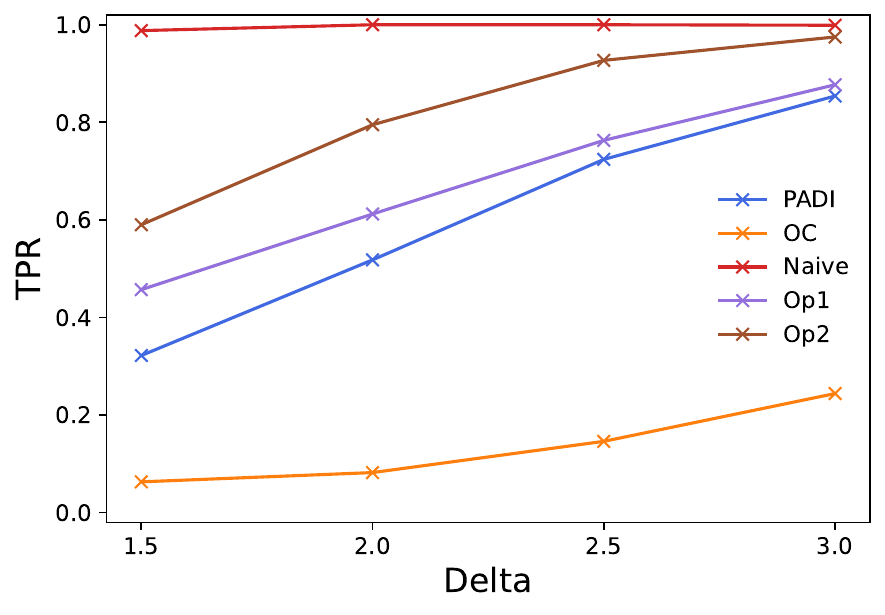}
        \caption{TPR on correlated data}
        \label{fig:add-synthetic-tpr-correlated}
    \end{subfigure}

    \caption{Additional evaluation on synthetic data for all competing methods.}
    \label{fig:add-synthetic-results}
    \vspace{-15pt}
\end{figure*}

The results are shown in
Fig.~\ref{fig:add-synthetic-results}. Similar to the main experiments, PADI and OC
maintain the empirical FPR around the target significance level in both
independent and correlated covariance settings, whereas Naive, Op1, and Op2
produce substantially inflated FPR values. As expected, Naive achieves the highest TPR
among all methods in both covariance settings, followed by Op1 and Op2. However,
these higher TPR values are accompanied by uncontrolled FPR, indicating that the
increase in detection power is obtained at the cost of invalid statistical
inference. Among the methods with valid FPR control, PADI consistently achieves
higher TPR than OC as the signal difference $\Delta$ increases. This additional
evaluation further highlights the importance of considering both statistical
validity and detection power when comparing post-selection inference methods.

\subsection{Runtime Breakdown of PADI}
\label{subsec:runtime_breakdown}

To further analyze the computational cost of PADI, we provide a runtime
breakdown of the three phases described in Algorithm~\ref{alg:padi}. The experiment follows the same setting as the ``Different numbers of
convolutional blocks'' experiment described in the ``Runtime evaluation of
CNN-based GPU kernels'' in \ref{subsec:synthetic}, with the
proposed Numba-CUDA implementation for CNN-based PADI. The runtime of each
phase is reported for CNN encoders with different numbers of convolutional
blocks.

\begin{table}[t]
\centering
\caption{
Runtime breakdown (in seconds) of the three phases in Algorithm~\ref{alg:padi}
using the proposed Numba-CUDA implementation.
}
\label{tab:runtime_breakdown}
\begin{tabular}{lcccc}
\toprule
 & 4 blocks & 5 blocks & 6 blocks & 7 blocks \\
\midrule
Phase I   & 0.01081 & 0.01102 & 0.01151 & 0.01142 \\
Phase II  & 9.46515 & 12.71300 & 24.60653 & 45.99283 \\
Phase III & 0.24853 & 0.16548 & 0.19514 & 0.15108 \\
\bottomrule
\end{tabular}
\end{table}

The results show that Phase II dominates the overall computational cost across
all tested encoder depths. This is expected because Phase II involves the
iterative line search and repeated CNN forward propagations required to
identify the feasible truncation region. In contrast, Phase I and Phase III
introduce relatively small overhead, as they mainly involve constructing the
selective inference direction and evaluating the final truncated Gaussian
probability. As the number of convolutional blocks increases, the runtime of
Phase II grows substantially, reflecting the additional cost of repeated forward
passes through deeper encoders.

\subsection{Runtime Breakdown: Training, Inference, and the Proposed PADI Method}
\label{subsec:runtime_breakdown_train_test_inference}

We conducted an additional experiment to compare the runtime of the different stages and quantify the additional computational cost of performing selective inference after a Deep SVDD model has been trained and used for standard inference. We use the same experimental setting as that described in the “Runtime Evaluation of CNN-Based GPU Kernels” in \S \ref{subsec:synthetic}, with the number of blocks fixed at four. All three stages---Deep SVDD training, standard Deep SVDD inference, and selective inference---are performed on an NVIDIA Tesla P100 GPU. For the SI stage, we use our proposed Numba-CUDA implementation for CNN-based PADI. The results are presented in Table \ref{tab:runtime_breakdown_train_test_inference}.

\begin{table}[t]
\centering
\caption{
Runtime Breakdown: Training, Inference, and the Proposed PADI Method
}
\label{tab:runtime_breakdown_train_test_inference}
\begin{tabular}{lc}
\toprule
 Stage & Runtime (s) \\
\midrule
Deep SVDD training   & 615.34 \\
Deep SVDD inference  & 0.9007 \\
PADI & 9.3698 \\
\bottomrule
\end{tabular}
\end{table}

\end{document}